\documentclass{article}
\usepackage{microtype}
\usepackage{graphicx}
\usepackage{subfigure}
\usepackage{booktabs}
\usepackage{empheq}
\usepackage{hyperref}

\usepackage[preprint]{icml_arxiv}
\usepackage{amsmath}
\usepackage{amssymb}
\usepackage{mathtools}
\usepackage{amsthm}

\usepackage{algorithm}
\usepackage{algpseudocode}
\usepackage[capitalize,noabbrev]{cleveref}
\theoremstyle{plain}
\newtheorem{theorem}{Theorem}[section]

\newtheorem{lemma}[theorem]{Lemma}
\newtheorem{corollary}[theorem]{Corollary}
\theoremstyle{definition}

\newtheorem{assumption}[theorem]{Assumption}
\theoremstyle{remark}

\definecolor{deepgreen}{RGB}{0,110,0}
\usepackage{array}
\usepackage{dcolumn}
\usepackage{booktabs}
\usepackage{array}
\usepackage{xcolor} 
\usepackage{graphicx}
\usepackage[most]{tcolorbox}
\usepackage{enumitem}
\newcommand{\inc}[1]{\hspace{2pt}\textcolor{deepgreen}{\footnotesize (#1)}}

\usepackage[textsize=tiny]{todonotes}

\icmltitlerunning{SetPO: Set-Level Policy Optimization for Diversity-Preserving LLM Reasoning}

\begin{document}

\twocolumn[
\icmltitle{SetPO: Set-Level Policy Optimization for Diversity-Preserving LLM Reasoning}
\icmlsetsymbol{equal}{*}
\icmlsetsymbol{leader}{{$\spadesuit$}}

\begin{icmlauthorlist}
\icmlauthor{Chenyi Li}{equal,yyy}
\icmlauthor{Yuan Zhang}{equal,leader,comp}
\icmlauthor{Bo Wang}{sch}
\icmlauthor{Guoqing Ma}{comp}
\icmlauthor{Wei Tang}{comp}
\icmlauthor{Haoyang Huang}{comp}
\icmlauthor{Nan Duan}{comp}
\end{icmlauthorlist}

\icmlaffiliation{yyy}{Peking University}
\icmlaffiliation{comp}{JD Explore Academy, China}
\icmlaffiliation{sch}{Beijing Institute of Technology}

\icmlcorrespondingauthor{Chenyi Li}{lichenyi@stu.pku.edu.cn}
\icmlcorrespondingauthor{Yuan Zhang}{zhangyuan.430@jd.com}
\icmlcorrespondingauthor{Nan Duan}{duannan@jd.com}
\icmlkeywords{Machine Learning, ICML}

\vskip 0.3in
]

\newcommand{\PO}{SetPO}

\printAffiliationsAndNotice{\icmlEqualContribution, \icmlProject} 

\begin{abstract}
Reinforcement learning with verifiable rewards has shown notable effectiveness in enhancing large language models (LLMs) reasoning performance, especially in mathematics tasks. However, such improvements often come with reduced outcome diversity, where the model concentrates probability mass on a narrow set of solutions. Motivated by diminishing-returns principles, we introduce a set level diversity objective defined over sampled trajectories using kernelized similarity. Our approach derives a leave-one-out marginal contribution for each sampled trajectory and integrates this objective as a plug-in advantage shaping term for policy optimization. We further investigate the contribution of a single trajectory to language model diversity within a distribution perturbation framework. This analysis theoretically confirms a monotonicity property, proving that rarer trajectories yield consistently higher marginal contributions to the global diversity. Extensive experiments across a range of model scales demonstrate the effectiveness of our proposed algorithm, consistently outperforming strong baselines in both Pass@1 and Pass@K across various benchmarks.
\end{abstract}

\section{Introduction}
The field of reinforcement learning (RL) has recently undergone a rapid expansion, with particularly strong advances in mathematical reasoning and code generation. This progress is largely driven by reinforcement learning with verifiable rewards (RLVR), whose scalability enables effective training at scale, thereby pushing the reasoning capabilities of large language models \cite{ouyang2022training, guo2025deepseek}. Within this paradigm, group relative policy optimization (GRPO) \cite{shao2024deepseek} emerged as a competitive alternative to proximal policy optimization (PPO) \cite{schulman2017proximal}, achieving strong empirical performance while eliminating the need for a critic model. This design choice has inspired a line of subsequent studies that build upon GRPO to improve training stability and empirical performance across diverse benchmarks \cite{yu2025dapo, zheng2025group, zhao2025geometric}.

\begin{figure*}[htbp]
    \centering
    \includegraphics[width=0.8\linewidth]{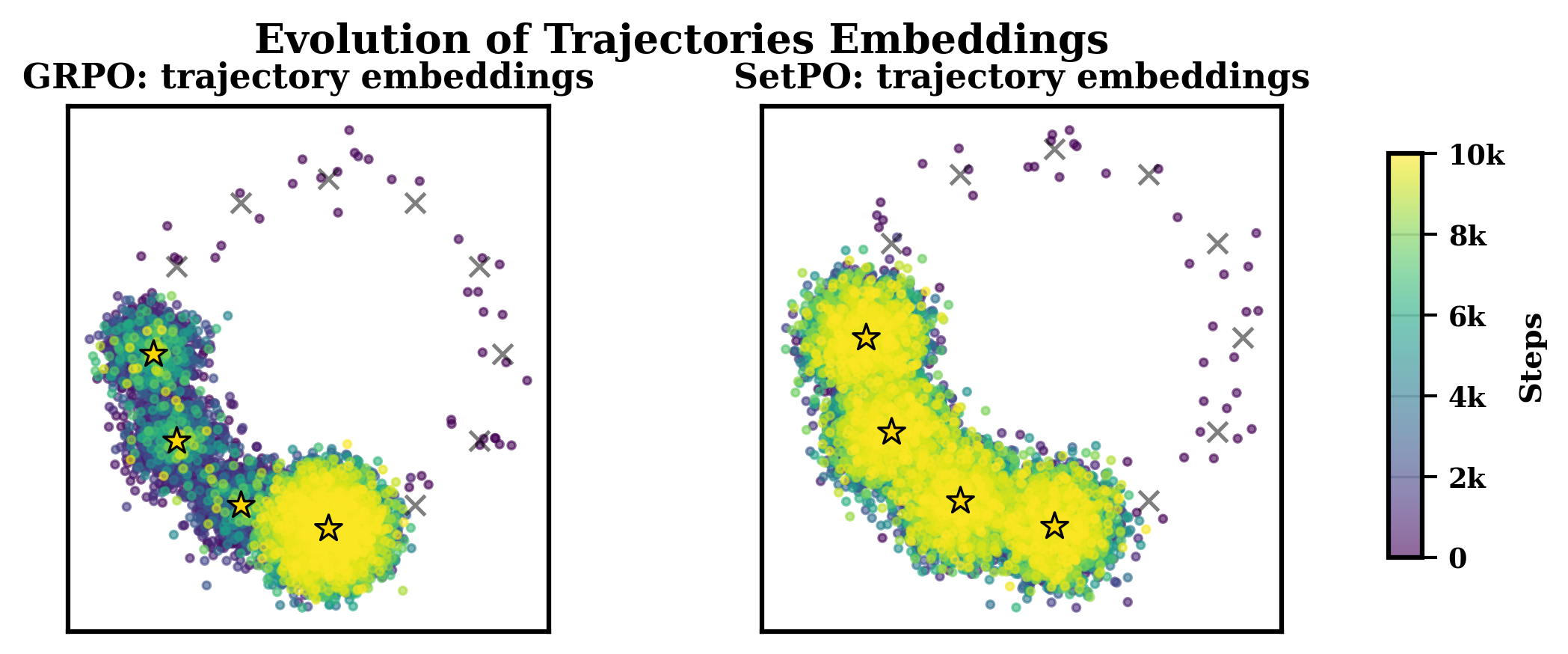}
    \caption{Evolution of trajectory embeddings during training for GRPO vs. SetPO (stars denote correct modes; color indicates training steps). Left (GRPO): Although trajectories initially populate multiple modes, training progressively concentrates on a single dominant cluster, exhibiting clear mode collapse.  Right (SetPO): Embeddings remain distributed across multiple clusters throughout training, continuing to cover the correct modes at late steps. This indicates that SetPO mitigates mode collapse and preserves rollout diversity.}
    \label{fig:toy_model}
\end{figure*}

While recent research has primarily focused on developing more effective group-based policy optimization algorithms, another line of work has observed that the apparent performance gains achieved through reinforcement learning often come at the cost of reduced model diversity. More concretely, recent evidence suggests that many trajectories produced by RLVR finetuned models can also be recovered from the base model by increasing the sampling budget, implying that the base model may act as an effective upper bound under extensive rollouts \cite{yue2025does}. Similarly, concurrent work in the LLM literature has also found that policy performance improvements are often accompanied by a decline in policy entropy, suggesting that enhanced performance may result from focusing the model's output distribution on a narrower set of solutions \cite{cui2025entropy}.

These observations highlight effective exploration as a key missing ingredient in RLVR based post-training, which is crucial for maintaining diverse outcomes while improving reasoning performance. However, unlike traditional reinforcement learning with a limited state-action space, the policy of LLMs, with their extended vocabulary size, operates in a high-dimensional space. As a result, the search space grows exponentially with the sequence length, imposing a significant burden on the exploration process \cite{gupta2024language}. Existing solutions often rely on entropy based regularization as a proxy for exploration in RL \cite{yao2025diversity, cheng2506reasoning}. However, when entropy is defined at the token level and aggregated over sequences, it can introduce a strong length bias. Longer responses naturally accumulate higher entropy due to increased token-level uncertainty, thereby favoring verbose outputs regardless of whether they provide genuinely diverse solutions. Moreover, token level entropy fails to capture semantic diversity at the trajectory level, as it measures local uncertainty in next-token prediction rather than global variation in the meaning of complete outputs. In addition, some recent studies explicitly optimize for Pass@K \cite{chen2025pass, tang2025optimizing, walder2025pass}. Yet, since Pass@K primarily rewards success under repeated sampling, such improvements may not translate into higher single-sample correctness, leading to limited gains or even degraded performance in Pass@1.

In this paper, we propose \PO, a diversity-preserving policy optimization algorithm that encourages the learned policy to represent a multi modal outcome distribution. \PO{} assigns each trajectory a set-level marginal diversity credit, yielding an explicit diversity-aware signal for policy optimization. Concretely, for each prompt, we sample a group of rollouts and compute pairwise semantic similarities using a pretrained embedding model. Based on these similarities, we quantify how redundant each trajectory is within the group and define its diversity contribution via a leave-one-out marginal estimator. Trajectories that cover distinct semantic modes receive higher credits, whereas redundant ones receive lower credits. We then incorporate this set-level diversity signal into standard reward maximization by augmenting the per-trajectory training advantage with a weighted diversity term. To validate the effectiveness of the proposed algorithm, we construct a toy multi-modal bandit environment with twelve discrete semantic modes, where only the first four modes yield positive reward. Each mode is represented by a 50-dimensional embedding, and semantic similarity is measured via cosine similarity. As shown in Figure~\ref{fig:toy_model}, our method preserves more distinct modes throughout training than vanilla GRPO. Consequently, it mitigates mode collapse and produces a more diverse set of outcomes.

Our main contributions are listed as follows.
\begin{itemize}
    \item To the best of our knowledge, \PO{} is the first to introduce an explicit set-level diversity objective. Distinct from traditional pairwise methods, our approach adopts a distributional perspective: it evaluates the holistic diversity of the entire sampled set by aggregating semantic similarities, rather than focusing on isolated local differences.
    \item We incorporate a leave-one-out marginal contribution mechanism into policy optimization, where each trajectory is rewarded based on its impact on the set's diversity. Our theoretical analysis demonstrates the anti-redundant nature of this estimator, showing that it assigns higher marginal value to rarer samples.
    \item We evaluate the proposed SetPO algorithm on comprehensive reasoning benchmarks, spanning model scales from 1.5B to 32B. Empirical results demonstrate that our method consistently achieves superior performance compared to strong baselines.
\end{itemize}

\section{Related Works}

\paragraph{Reinforcement Learning for LLM Reasoning.} 
A series of studies \cite{zhao2025geometric, liu2025understanding, shrivastava2025sample} have been proposed to improve GRPO-style optimization for RLVR. DAPO \cite{yu2025dapo} introduces several modifications to enhance training stability and sample efficiency. In particular, it adopts asymmetric clipping by using distinct clip ratios $\epsilon_{low}$ and $\epsilon_{high}$, allowing low probability tokens to receive effective updates. Moreover, DAPO filters out prompts for which all sampled trajectories are correct or incorrect, thereby focusing optimization on samples that provide meaningful learning signals. In addition, GSPO \cite{zheng2025group} adopts a sequence-level reward formulation, in contrast to the token-level objective used in GRPO, which improves training stability for LLMs, especially in mixture of experts settings. Alternative approaches demonstrate that reinforce style policy gradient methods can attain competitive performance and stable training when appropriately tuned \cite{chu2025gpg, ahmadian2024back}. For example, REINFORCE++ \cite{hu2025reinforce++} introduces global advantage normalization to improve the training stability of vanilla policy gradient methods. ReMax \cite{li2023remax} modifies the reinforce gradient estimator by introducing a subtractive baseline to reduce variance and improve training stability. Specifically, the baseline is computed by greedily sampling a response and evaluating its corresponding reward.

\paragraph{Diversity in  Reinforcement Learning.} 
Exploration is essential for performance improvement in high dimensional reinforcement learning \cite{ladosz2022exploration}. Previous studies \cite{yao2025diversity, cheng2506reasoning, wang2025beyond} primarily adopt entropy-based regularization to balance exploitation and exploration. These approaches treat entropy as a proxy for diversity by maximizing the average output entropy of LLMs, rewarding trajectories with higher token-level uncertainty to encourage more diverse generations. \cite{masood2019diversity} instead propose to exploit maximum mean discrepancy distance as a trajectory level regularizer to enhance diversity of the learned policy. Recent studies \cite{lanchantin2025diverse, chung2025modifying, ismayilzada2025creative} explicitly incorporate both quality and diversity into preference optimization.

Concurrent work \cite{chen2025seed, li2025jointly} also explores trajectory-level semantic diversity measures for RL fine-tuning of LLMs. While these methods similarly rely on trajectory-level semantic similarity to quantify diversity, \PO{} adopts a fundamentally different formulation.  We introduce a set-level objective that evaluates the generated batch as a holistic distribution. By employing a leave-one-out marginal calculation, we explicitly measure the contribution of each trajectory to the group's total information mass, thereby optimizing diversity from a global distributional perspective rather than through pairwise separation explored in previous studies.

\section{Diversity-Aware Reward Calculation}\label{sec: reward}
\subsection{Preliminaries: Group-based Policy Optimization}

Group-based policy optimization has emerged as a stable and efficient alternative to standard PPO for language model alignment, primarily by eliminating the need for a separate critic network. One of the representative methods in this family is GRPO.

Formally, for each prompt $x$ sampled from the dataset $\mathcal{D}$, GRPO generates a group of $G$ outputs $\{o_1, \dots, o_G\}$ from the current policy $\pi_{\theta_{\mathrm{old}}}$.
Denote $\rho_{i,t}(\theta) = \frac{\pi_\theta(a_{i,t} \mid x, a_{i,<t})}{\pi_{\theta_{\mathrm{old}}}(a_{i,t} \mid x, a_{i,<t})}$ is the importance sampling ratio, and $\mathbb{D}_{\text{KL}}$ is the KL divergence penalty to keep the policy close to a reference model $\pi_{\mathrm{ref}}$. The reward is assigned with a scalar score $r_i$ to each trajectory $o_i$ by the model. To reduce variance without a learned value function, GRPO computes the advantage $\bar A_i$ for each output by normalizing the rewards relative to the group statistics:
\(
\bar A_i = \frac{r_i - \mu(\{r_j\}_{j=1}^G)}{\sigma(\{r_j\}_{j=1}^G) + \epsilon},
\)
where $\mu$ and $\sigma$ denote the mean and standard deviation of the rewards within the group. The policy parameters $\theta$ are updated by maximizing the following surrogate objective:
\begin{align} 
\begin{aligned}
J(\theta)
=&
\mathbb E\Bigg[
\frac1G\sum_{i=1}^{G}\frac1{|o_i|}\sum_{t=1}^{|o_i|}
\min\Big(
\rho_{i,t}(\theta)\, \bar A_i,\ 
\\& \mathrm{clip}(\rho_{i,t}(\theta),1-\varepsilon,1+\varepsilon)\,\bar A_i
\Big)
\;-\;\\&
\beta\,
\mathrm{KL}\!\left(
\pi_\theta(\cdot\mid x,a_{i,<t})\ \big\|\ \pi_{\mathrm{ref}}(\cdot\mid x,a_{i,<t})
\right)
\Bigg].
\end{aligned}
\label{eq:grpo-obj}
\end{align}


\subsection{Diversity of a Language Model}

We first study the diversity measure of a language model at the level of trajectories. Given a prompt $x$, a trajectory is defined as a finite token sequence $y=(a_1,\dots,a_T)\in\Omega$, where $a_T$ is the \textit{EOS} token or $T$ reaches the maximum length. Accordingly, a parameterized language model induces a trajectory distribution  following 
\(
P_\theta(y | x)=\prod\limits_{t=1}^{T}\pi_\theta(a_t |x,a_{<t}).
\)
In the following discussion, we omit the subscript $\theta$ where unambiguous.

Unlike methods that focus on diversity at the token level, we evaluate complete outputs semantically. We utilize a bounded, symmetric similarity kernel $k:\Omega\times\Omega\to[0,1]$, satisfying $k(y,y')=k(y',y)$ and $k(y,y)=1$. Specifically, we let $k(y, y') = \operatorname{sim}(e(y), e(y'))$, employing a pre-defined semantic embedding function $e(\cdot) : \Omega \to \mathbb{R}^n$, where $n$ denotes the dimension of the embedding space.

For any trajectory distribution $P\in\mathcal{P}(\Omega)$, we define the {kernelized local mass} of a single trajectory $y$ as:
\[
m_P(y):=\mathbb{E}_{y'\sim P}\left[k(y,y')\right].
\]
Conceptually, $m_P(y)$ estimates the semantic density surrounding $y$. A high value indicates that $y$ resides in a crowded neighborhood, implying redundancy. Conversely, a low value places $y$ in a sparse region, signifying relative novelty. We then quantify the language model's overall diversity by aggregating these local estimates. Let $g:[0,1]\to\mathbb{R}$ be a continuous, non-increasing function. The diversity objective is defined as:
\begin{align}
    \mathcal{F}(P):=\mathbb{E}_{y\sim P}\Big[g\big(m_P(y)\big)\Big].
    \label{eq: functional}
\end{align}
Since $g$ is non-increasing, a smaller local mass $m_P(y)$ yields a larger return. This mechanism effectively rewards semantic novelty. In this paper, we adopt the following shaping function:
\(
g(x)=-\log(1+x).
\)
Beyond numerical stability, this formulation embodies the principle of {diminishing marginal sensitivity}. The gradient magnitude is steepest when $x \approx 0$ and decays as $x$ increases. Consequently, the function prioritizes the preservation of highly unique trajectories, as the reward for adjusting already crowded regions diminishes.

\subsection{Leave-one-out Margin for a Single Trajectory}

We now derive a per-sample marginal assignment by from the aspect of assessing the diversity of a language model through Monte Carlo approximations over arbitrary subsets. Let $S \subset \Omega$ be any finite collection of i.i.d.\ trajectories drawn from $P_\theta(\cdot\mid x)$ with cardinality $|S| \ge 3$. Since $S$ serves as a Monte Carlo sample of the trajectory space, we can estimate the population diversity $\mathcal{F}(P)$ using a generalized within-group score $D(S)$:
\begin{align}
D(S):=\frac{1}{|S|}\sum_{y\in S} g\big(\widehat m(y; S)\big),
\end{align}
where $\widehat m(y; S):=\frac{1}{|S|-1}\sum_{z\in S\setminus\{y\}}k(y, z)$.
Here, $\widehat m(y; S)$ estimates the local semantic density around $y$ based on the empirical distribution of $S$. Since the samples are i.i.d., $D(S)$ remains a consistent estimator for any valid subset $S$.

This statistical property allows us to quantify the contribution of a single trajectory $o_i$ by contrasting the diversity estimate of the full generated batch $\Omega=\{o_1, \dots, o_G\}$ against that of the subset $\Omega \setminus\{o_i\}$. If $o_i$ is informative, its presence should improve the resolution of the approximation compared to the subset that excludes it. We therefore define the diversity contribution $s_i$ of each trajectory $o_i$ towards the considered whole set as the leave-one-out marginal difference given as:
\begin{align}
s_i := D(\Omega) - D\big(\Omega\setminus\{o_i\}\big).
\label{eq: leave one out bonus}
\end{align}

In this formulation, $D(\Omega)$ utilizes all $G$ samples to estimate the kernelized density, while $D(\Omega\setminus\{o_i\})$ re-evaluates the score using the remaining $G-1$ trajectories. Explicitly, the term for the reduced set is computed as:
\[
D(\Omega\setminus\{o_i\}) = \frac{1}{G-1}\sum_{j\neq i} g\big(\widehat m(o_j; \Omega\setminus\{o_i\})\big).
\]
A larger $s_i$ indicates that including $o_i$ yields a more significant gain in the estimated diversity of the set, implying that $o_i$ captures a distinct semantic mode not covered by other samples. Conversely, a small or negative $s_i$ suggests redundancy. Since the group size $G$ in reinforcement learning is typically small, computing these recursive leave-one-out marginals imposes a negligible computational burden.

\subsection{Set-Level Policy Optimization Methods}

We integrate the marginal diversity contribution for each sample towards a certain set derived above as a plug-in shaping term for the group policy optimization framework introduced in the Preliminaries. Our method requires only the standard group of sampled trajectories $\{o_i\}_{i=1}^G$ and the base advantages $\bar A_i$.

We compute the diversity score $s_i$ from the same group and form the augmented advantage:
\[
\hat A_i := \bar A_i + \lambda\  s_i,
\]
where $\lambda\ge 0$ controls the trade-off between task performance and diversity. We then substitute this augmented advantage directly into the standard GRPO objective defined in Eq.~\eqref{eq:grpo-obj}, replacing the original base advantage. 
The diversity plug-in solely modifies the advantage term $\bar A_i$. This modular design ensures that our diversity-aware reward calculation is compatible with a broad family of set-level RL objectives including DAPO, GSPO and so on, provided they rely on a group-based advantage estimation.

\section{Theoretical Analysis}
\label{sec:theory}

\subsection{Population view}

\begin{table*}[t]
\centering
\renewcommand{\arraystretch}{1.1}
\caption{Pass@1 accuracy on mathematical benchmarks for Qwen2.5-Math-7B. Entries marked with * are from the original paper.}
\label{tab:pass1_math_7b}
\setlength{\tabcolsep}{4pt}
\begin{tabular}{l c c c c c c c}
\toprule
\textbf{Method} &
\textbf{GSM8K} &
\textbf{MATH500} &
\textbf{College Math} &
\textbf{AMC23} &
\textbf{AIME24} &
\textbf{AIME25} &
\textbf{Avg} \\
\midrule
Qwen2.5-Math-7B & 53.4 & 48.1 & 19.4 & 41.3 & 9.9 & 4.0 & 29.4 \\
R1-zero-Div$^*$   & 90.6 & 76.9 &  47.5 & - & - & - & - \\
\midrule
GRPO            & 89.1 & 73.5 & 42.2 & 53.5 & 15.4 & 9.7 & 47.2 \\
GSPO            & 89.5 & 73.0 & 45.0 & 57.7 & 17.4 & 7.9 & 48.4 \\
DAPO            & 92.2 & 77.6 & 47.1 & 59.6 & 21.7 & 11.0 & 51.7 \\
\midrule\midrule
\textbf{SetPO}+GRPO & 92.2\inc{+3.1} & \textbf{80.8}\inc{+7.3} & 48.3\inc{+6.1} & 60.5\inc{+7.0} & 21.6\inc{+8.2} & \textbf{13.6}\inc{+3.9} & 52.8\inc{+5.6} \\
\textbf{SetPO}+GSPO & 91.6\inc{+2.1} & 75.7\inc{+2.7} & 47.2\inc{+2.2} & 60.9\inc{+3.2} & 21.1\inc{+3.7} & 10.1\inc{+2.2} & 51.1\inc{+2.7} \\
\textbf{SetPO}+DAPO & \textbf{93.0}\inc{+0.8} & 79.2\inc{+1.6} & \textbf{49.5}\inc{+2.2} & \textbf{62.3}\inc{+2.7} & \textbf{25.3}\inc{+3.6} & 13.5\inc{+2.5} & \textbf{53.8}\inc{+2.1} \\
\bottomrule
\end{tabular}
\end{table*}

To understand the change of the diversity $\mathcal{F}(P)$ towards any single trajectory, we take a perturbation perspective. Firstly, we give the assumptions on the kernel and the shaping functions as follows.
\begin{assumption}
\label{assump:reg_g_k}
The kernel function $k(\cdot,\cdot)$  is measurable and symmetric, i.e. $k(x,y)=k(y,x)$. Besides, it holds that $0\le k(x,y)\le 1$ for all $x,y$. The shaping function $g:[0,1]\to\mathbb R$ is continuously differentiable on $[0,1]$ and $g'$ is bounded. Besides $g$ is non-increasing on $[0,1]$.
\end{assumption}

Consider the mixture update $P_\varepsilon=(1-\varepsilon)P+\varepsilon\delta_\tau$, which can be seen as a perturbation of the original distribution by single trajectory $\tau$. We characterize the influence of $\tau$ through G\^ateaux derivative.

\begin{theorem}[Influence function of the diversity functional]
\label{thm:influence_main}
Under Assumption \ref{assump:reg_g_k}, the G\^ateaux derivative of $\mathcal{F}$ at $P$ in the direction $\tau$ is
\begin{align*}
\mathcal{I}(\tau;P)
:=& \frac{d}{d\varepsilon}\mathcal{F}(P_\varepsilon)\Big|_{\varepsilon=0}
= \underbrace{g(m_P(\tau))}_{\textbf{intrinsic novelty}}
\\ &\;+\;
\underbrace{\mathbb{E}_{y\sim P}\!\left[g'(m_P(y))\,k(y,\tau)\right]}_{\textbf{interaction penalty}}
\;-\;\Psi(P),  
\end{align*}
where $\Psi(P)$ does not depend on $\tau$.
\end{theorem}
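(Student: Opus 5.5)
The plan is to compute $\mathcal{F}(P_\varepsilon)$ explicitly, exploiting its bilinear structure in the measure, and then differentiate under the integral sign. The first step is to express the local mass under the perturbed measure. By linearity of expectation in the measure,
\[
m_{P_\varepsilon}(y)=(1-\varepsilon)m_P(y)+\varepsilon k(y,\tau)=m_P(y)+\varepsilon\big(k(y,\tau)-m_P(y)\big).
\]
Since $0\le k\le 1$, both $m_P(y)$ and $k(y,\tau)$ lie in $[0,1]$. Their convex combination therefore also lies in $[0,1]$ for $\varepsilon\in[0,1]$, so $g$ is always evaluated inside its domain.

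Next I split the outer expectation:
\[
\mathcal{F}(P_\varepsilon)=(1-\varepsilon)\,\mathbb{E}_{y\sim P}\big[g(m_{P_\varepsilon}(y))\big]+\varepsilon\, g(m_{P_\varepsilon}(\tau)).
\]
I then differentiate each term at $\varepsilon=0$.

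The second term is $\varepsilon\,h(\varepsilon)$, where $h(\varepsilon)=g(m_{P_\varepsilon}(\tau))$ is continuous. Its derivative at $0$ is therefore $h(0)=g(m_P(\tau))$, which is the intrinsic novelty term.

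For the first term, the product rule gives
\[
-\mathbb{E}_P\big[g(m_P(y))\big]+\frac{d}{d\varepsilon}\mathbb{E}_P\big[g(m_{P_\varepsilon}(y))\big]\Big|_{\varepsilon=0}.
\]
The first piece equals $-\mathcal{F}(P)$. For the second piece, the pointwise derivative of the integrand is $g'(m_P(y))\big(k(y,\tau)-m_P(y)\big)$.

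The main technical step is justifying the interchange of derivative and expectation. By the mean value theorem, the difference quotient equals $g'(\xi_{y,\varepsilon})\big(k(y,\tau)-m_P(y)\big)$ for some $\xi_{y,\varepsilon}$ between $m_P(y)$ and $m_{P_\varepsilon}(y)$. Because $|k(y,\tau)-m_P(y)|\le 1$, this quotient is bounded uniformly by $\sup|g'|<\infty$ (Assumption~\ref{assump:reg_g_k}). By continuity of $g'$, the integrand converges pointwise as $\varepsilon\to0$. Dominated convergence then yields
\[
\mathbb{E}_P\big[g'(m_P(y))k(y,\tau)\big]-\mathbb{E}_P\big[g'(m_P(y))m_P(y)\big].
\]
Measurability of $y\mapsto m_P(y)$ follows from measurability of $k$ and Fubini/Tonelli, since $k$ is bounded.

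Collecting the terms gives the claimed form, with
\[
\Psi(P)=\mathcal{F}(P)+\mathbb{E}_{y\sim P}\big[g'(m_P(y))\,m_P(y)\big],
\]
which manifestly does not depend on $\tau$. Symmetry of $k$ is what lets me write the interaction term as $k(y,\tau)$ regardless of the order of arguments in the definition of $m_P$. I expect the only real obstacle to be the dominated convergence justification, and the bounded-$g'$ assumption handles it directly.
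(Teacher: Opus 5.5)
Your proposal is correct and follows essentially the same route as the paper. Both split $\mathcal F(P_\varepsilon)$ into the $(1-\varepsilon)$-weighted expectation over $P$ and the $\varepsilon$-weighted point mass at $\tau$, handle the latter by continuity and the former by dominated convergence using bounded $g'$, and arrive at the same $\Psi(P)=\mathcal F(P)+\mathbb E_{y\sim P}[g'(m_P(y))\,m_P(y)]$. Your product-rule phrasing is equivalent to the paper's division of the exact finite-difference decomposition by $\varepsilon$, and your mean-value-theorem bound and domain check make the interchange step slightly more explicit than the paper does.
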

The concrete technical derivations are omitted here and we defer all proofs to Appendix \ref{appendix: continuous}.  The influence splits into two complementary mechanisms:
(i) an \emph{intrinsic novelty} term $g(m_P(\tau))$ that strictly favors low local mass since $g$ is non-increasing; and
(ii) an \emph{interaction penalty} that discourages trajectories that are highly similar to what already exists, since $g'(m_P(y))<0$ and the kernel weight $k(y,\tau)$ concentrates the penalty on nearby regions.
Thus, the $\mathcal{F}(P)$ follows diminishing-returns principles and is {anti-redundant}.

\subsection{Leave-one-out as marginal contribution}

Our algorithm operates on a finite candidate set and repeatedly removes the element that hurts diversity the least.
We need to stress that the leave-one-out (LOO) score \eqref{eq: leave one out bonus}
is not a heuristic proxy. It is {exactly} the marginal decrease in the objective induced by removing $o_i$.
The following results make this precise and show that $s_i$ systematically penalizes redundancy. For finite set $\Omega$, we characterize the marginal contribution of $o_i$ as follows.

\begin{theorem}[Exact decomposition of the LOO contribution]
\label{thm:si_decomp_main}
For $|\Omega|\ge 3$, $s_i$ admits an exact decomposition into three parts:
\begin{align*}
 s_i
=&\underbrace{\frac{1}{|\Omega|}g(m_i)}_{\mathcal T_{\text{self}}}
+\underbrace{\Big(\frac{1}{|\Omega|}-\frac{1}{|\Omega|-1}\Big)\sum_{j\neq i}g(m_j)}_{\mathcal T_{\text{norm}}}
\\ &+\underbrace{\frac{1}{|\Omega|-1}\sum_{j\neq i}\Big[g(m_j)-g(m_j^{(-i)})\Big]}_{\mathcal T_{\text{interaction}}},
\end{align*}
where $m_j^{(-i)}=\widehat m(o_j; \Omega\setminus\{o_i\})$.
\end{theorem}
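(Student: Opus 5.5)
This is a purely algebraic identity, so the plan is to expand both terms of the definition \eqref{eq: leave one out bonus} and regroup. Write $G=|\Omega|$, $m_j=\widehat m(o_j;\Omega)$ and $m_j^{(-i)}=\widehat m(o_j;\Omega\setminus\{o_i\})$. The hypothesis $|\Omega|\ge 3$ guarantees that $\Omega\setminus\{o_i\}$ has at least two elements. Hence every $m_j^{(-i)}$ is an average over a nonempty set, and $D(\Omega\setminus\{o_i\})$ is well defined. No property of $g$ or $k$ from Assumption~\ref{assump:reg_g_k} should be needed.

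\textbf{Step 1.} Split the full-set score by separating the $i$-th summand:
\[
D(\Omega)=\frac{1}{G}g(m_i)+\frac{1}{G}\sum_{j\neq i}g(m_j).
\]
The first piece is already $\mathcal T_{\text{self}}$.

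\textbf{Step 2.} Write the reduced-set score as
\[
D(\Omega\setminus\{o_i\})=\frac{1}{G-1}\sum_{j\neq i}g(m_j^{(-i)}).
\]
Add and subtract $\frac{1}{G-1}\sum_{j\neq i}g(m_j)$ inside the difference $D(\Omega)-D(\Omega\setminus\{o_i\})$. Then
\[
\frac{1}{G}\sum_{j\neq i}g(m_j)-\frac{1}{G-1}\sum_{j\neq i}g(m_j)
\]
gives $\mathcal T_{\text{norm}}$. The leftover
\[
\frac{1}{G-1}\sum_{j\neq i}\bigl[g(m_j)-g(m_j^{(-i)})\bigr]
\]
gives $\mathcal T_{\text{interaction}}$. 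Summing the three pieces recovers $s_i$ exactly.

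The only place requiring care is bookkeeping, not mathematics. The quantities $m_j$ and $m_j^{(-i)}$ use different denominators ($G-1$ versus $G-2$) and different index sets. So they must be kept as distinct symbols throughout, and no step should silently identify them. That distinction is exactly what $\mathcal T_{\text{interaction}}$ records.

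As an optional remark, one can make the interaction term explicit using
\[
m_j^{(-i)}=\frac{(G-1)m_j-k(o_j,o_i)}{G-2}.
\]
This shows that $m_j^{(-i)}-m_j=\frac{m_j-k(o_j,o_i)}{G-2}$. This form is not needed for the identity, but it is a natural lead-in to the later monotonicity and redundancy statements.
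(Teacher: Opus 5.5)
Your proposal is correct and is essentially the paper's proof: both split off the $i$-th summand of $D(\Omega)$, write $D(\Omega\setminus\{o_i\})=\frac{1}{G-1}\sum_{j\neq i}g(m_j^{(-i)})$, and regroup by adding and subtracting $\frac{1}{G-1}\sum_{j\neq i}g(m_j)$. Your optional formula $m_j^{(-i)}=\frac{(G-1)m_j-k(o_j,o_i)}{G-2}$ is the same as the paper's removal-update lemma, which the paper also mentions alongside the theorem but does not use in the identity itself.
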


The concrete technical derivations are omitted here and we defer all proofs and a complete analysis to Appendix \ref{appendix: discrete}. Moreover, each interaction summand depends on $o_i$ only through the pairwise similarity $k(o_j,o_i)$.

The decomposition illustrates why $s_i$ serves as a robust signal. The first term, $\mathcal T_{\text{self}}$, rewards points that are individually novel under the diversity shaping function $g$. Meanwhile, the interaction term, $\mathcal T_{\text{interaction}}$, measures the extent to which $o_i$ \emph{congests} the neighborhood structure. Removing a redundant point typically reduces the local mass of its neighbors, thereby increasing their $g(\cdot)$ values. This raises $D(\Omega\setminus\{o_i\})$ and consequently yields a smaller $s_i$. This mechanism captures exactly the diminishing-returns behavior desired for the leave-one-out construction.

Furthermore, we consider our marginal contribution towards each similarity between current sample and others.
\begin{theorem}[Strict anti-redundancy of $s_i$]
\label{thm:si_mono_main}
For any $t\neq i$, if Assumption \ref{assump:reg_g_k} holds, the leave-one-out contribution is strictly decreasing in the pairwise similarity $k(o_i,o_t)$:
\[
\frac{\partial s_i}{\partial k(o_i,o_t)}
=\frac{1}{|\Omega|(|\Omega|-1)}\Big(g'(m_i)+g'(m_t)\Big)\;<\;0.
\]
\end{theorem}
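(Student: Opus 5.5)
The plan is to differentiate $s_i = D(\Omega) - D(\Omega\setminus\{o_i\})$ directly, working from the definition \eqref{eq: leave one out bonus} rather than from the three-term split of Theorem~\ref{thm:si_decomp_main}. Write $N=|\Omega|$ and $k_{ab}:=k(o_a,o_b)$. Treat the off-diagonal similarities $\{k_{ab}\}_{a<b}$ as independent variables. By symmetry (Assumption~\ref{assump:reg_g_k}), $k_{it}=k_{ti}$ is a single variable, and the diagonal entries are fixed at $1$. All local masses are affine functions of these variables, and $g$ is $C^1$ on $[0,1]$, so $s_i$ is continuously differentiable in $k_{it}$ and the chain rule applies.

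\textbf{Step 1: the reduced set does not depend on $k_{it}$.} The quantity $D(\Omega\setminus\{o_i\})$ is built only from the masses $m_j^{(-i)}=\frac{1}{N-2}\sum_{z\neq i,j}k_{jz}$ for $j\neq i$. None of these involves a pair containing $o_i$. Hence $\partial D(\Omega\setminus\{o_i\})/\partial k_{it}=0$.

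\textbf{Step 2: locate $k_{it}$ inside $D(\Omega)$.} We have $D(\Omega)=\frac1N\sum_{j} g(m_j)$ with $m_j=\frac{1}{N-1}\sum_{z\neq j}k_{jz}$. The variable $k_{it}$ enters exactly two of these masses:
\begin{itemize}
\item $m_i$, through the summand $z=t$, with coefficient $\frac{1}{N-1}$;
\item $m_t$, through the summand $z=i$, again with coefficient $\frac{1}{N-1}$.
\end{itemize}
Every other $m_j$ with $j\notin\{i,t\}$ is independent of $k_{it}$. The chain rule then gives
\[
\frac{\partial s_i}{\partial k_{it}}=\frac{1}{N}\cdot\frac{1}{N-1}\big(g'(m_i)+g'(m_t)\big),
\]
which is the stated formula.

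As a consistency check, the same result follows from Theorem~\ref{thm:si_decomp_main}. There, $k_{it}$ appears in $\mathcal T_{\text{self}}$ (through $m_i$), in $\mathcal T_{\text{norm}}$ (through $m_t$, coefficient $\frac1N-\frac1{N-1}$), and in $\mathcal T_{\text{interaction}}$ (through $g(m_t)$, coefficient $\frac{1}{N-1}$). The two contributions involving $g'(m_t)$ combine to $\frac{1}{N}g'(m_t)\cdot\frac{1}{N-1}$, matching the direct computation.

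\textbf{Step 3: sign, the delicate point.} Assumption~\ref{assump:reg_g_k} only says $g$ is non-increasing, which gives $g'\le 0$. That yields the weak inequality $\partial s_i/\partial k_{it}\le 0$, not strict monotonicity. For the strict inequality I would make explicit that the argument requires $g'<0$ at $m_i$ or at $m_t$, for instance by assuming $g$ is strictly decreasing with nonvanishing derivative on $[0,1]$. This holds for the adopted shaping function $g(x)=-\log(1+x)$, since $g'(x)=-1/(1+x)\in[-1,-\tfrac12]$. With this, the derivative is strictly negative. Because the expression is continuous in $k_{it}$ and negative throughout $[0,1]$, $s_i$ is strictly decreasing in $k(o_i,o_t)$ when all other similarities are held fixed.
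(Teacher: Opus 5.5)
Your proof is correct and follows the paper's argument: $D(\Omega\setminus\{o_i\})$ does not depend on $k_{it}$, and inside $D(\Omega)$ the variable $k_{it}$ enters only $m_i$ and $m_t$, each with coefficient $\frac{1}{|\Omega|-1}$. Your caveat about strictness is also right: the paper's appendix proof gets $<0$ from $g'<0$, which it imposes in the appendix's standing definitions rather than in Assumption~\ref{assump:reg_g_k}, so the strengthening you propose is exactly what the paper uses.
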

\begin{table*}[t]
\centering
\renewcommand{\arraystretch}{1.1}
\caption{Pass@1 accuracy on mathematical benchmarks for Qwen2.5-Math-1.5B. Entries marked with * are from the original paper.}
\label{tab:pass1_math_15b}
\setlength{\tabcolsep}{4pt}
\begin{tabular}{l c c c c c c c}
\toprule
\textbf{Method} &
\textbf{GSM8K} &
\textbf{MATH500} &
\textbf{College Math} &
\textbf{AMC23} &
\textbf{AIME24} &
\textbf{AIME25} &
\textbf{Avg} \\
\midrule
Qwen2.5-Math-1.5B & 39.4 & 36.4 & 6.6 & 29.5 & 4.6 & 1.8 & 19.7 \\
R1-zero-Div$^*$       & 83.2 & 70.4 & 43.9 & - & - & - & - \\
\midrule
GRPO              & 83.0 & 67.2 & 43.1 & 47.5 & 11.5 & 8.2 & 43.4 \\
GSPO              & 83.5 & 67.2 & 43.0 & 48.8 & 11.1 & 5.6 & 43.2 \\
DAPO              & 85.9 & 69.5 & 44.3 & 50.1 & 12.0 & 6.6 & 44.7 \\
\midrule\midrule
\textbf{SetPO}+GRPO & \textbf{86.9}\inc{+3.9} & \textbf{72.9}\inc{+5.7} & 45.7\inc{+2.6} & 51.3\inc{+3.8} & \textbf{13.4}\inc{+1.9} & \textbf{9.7}\inc{+1.5} & \textbf{46.7}\inc{+3.3} \\
\textbf{SetPO}+GSPO & 85.0\inc{+1.5} & 69.4\inc{+2.2} & 44.7\inc{+1.7} & 50.6\inc{+1.8} & 13.2\inc{+2.1} & 7.4\inc{+1.8} & 45.2\inc{+2.0} \\
\textbf{SetPO}+DAPO & 86.6\inc{+0.7} & 71.7\inc{+2.2} & \textbf{46.5}\inc{+2.2} & \textbf{53.2}\inc{+3.1} & \textbf{13.4}\inc{+1.4} & 8.9\inc{+2.3} & \textbf{46.7}\inc{+2.0} \\
\bottomrule
\end{tabular}
\end{table*}

Increasing similarity to any other element {must} reduce the marginal contribution of $o_i$ under our objective.
Therefore, iterative removal of the smallest-$s_i$ elements is provably a redundancy-removal procedure. Finally, we establish a global ordering guarantee. If one point is uniformly less similar to the rest of the set, then leave-one-out necessarily ranks it higher.

\begin{theorem}[Dominance ordering]
\label{thm:si_dom_main}
Let $o_r,o_c\in \Omega$. If Assumption \ref{assump:reg_g_k} holds and 
$k(o_r,o_j)\le k(o_c,o_j)$ for all $j\notin\{r,c\}$, and strict inequality holds for at least one such $j$,
then it holds that $s_r>s_c$.
\end{theorem}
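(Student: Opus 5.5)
The plan is to avoid differentiating along a path and instead compare the two leave-one-out sets directly. Since both $s_r$ and $s_c$ are measured against the same full-set score $D(\Omega)$, it cancels in their difference:
\[
s_r - s_c = D\big(\Omega\setminus\{o_c\}\big) - D\big(\Omega\setminus\{o_r\}\big).
\]
So it suffices to show that the set keeping $o_r$ is strictly more diverse than the set keeping $o_c$. Both sets have cardinality $|\Omega|-1$, so they share the normalizing factor $1/(|\Omega|-1)$, and the factor $1/(|\Omega|-2)$ in $\widehat m$. They also share the common elements $o_j$ with $j\notin\{r,c\}$. They differ only in the single \emph{distinguished} element, $o_r$ versus $o_c$. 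I would therefore compare the two sums term by term.

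First, the common elements. For $j\notin\{r,c\}$, the local mass of $o_j$ in $\Omega\setminus\{o_c\}$ is
\[
\frac{1}{|\Omega|-2}\Big(\sum_{l\notin\{r,c,j\}}k(o_j,o_l)+k(o_j,o_r)\Big).
\]
In $\Omega\setminus\{o_r\}$ it is the same expression with $k(o_j,o_c)$ in place of $k(o_j,o_r)$. By hypothesis and symmetry of $k$, $k(o_j,o_r)\le k(o_j,o_c)$. Hence the mass in the first set is at most the mass in the second, and since $g$ is non-increasing, each term $g(\cdot)$ is at least as large in $D(\Omega\setminus\{o_c\})$.

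Second, the distinguished element. Its mass is
\[
\frac{1}{|\Omega|-2}\sum_{j\notin\{r,c\}}k(o_r,o_j)
\quad\text{versus}\quad
\frac{1}{|\Omega|-2}\sum_{j\notin\{r,c\}}k(o_c,o_j).
\]
Because the inequality $k(o_r,o_j)\le k(o_c,o_j)$ is strict for at least one $j$, the first mass is \emph{strictly} smaller. Summing both comparisons gives $s_r - s_c \ge 0$, with strict inequality as soon as $g$ strictly separates these two masses.

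The main obstacle is this strictness step. Assumption~\ref{assump:reg_g_k} states only that $g$ is non-increasing, which by itself gives $s_r\ge s_c$. For the strict conclusion I would use the same standing condition that makes the strict sign in Theorem~\ref{thm:si_mono_main} valid, namely $g'<0$ on $[0,1]$. This holds for the adopted shaping function $g(x)=-\log(1+x)$, whose derivative is $-1/(1+x)<0$. Under $g'<0$, the mean value theorem turns the strict inequality between the distinguished masses into a strict inequality between the $g$ values, which yields $s_r>s_c$.

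A fallback route would interpolate $k(o_r,o_j)$ from $k(o_c,o_j)$ down to its actual value. At the start of the path, swapping $r$ and $c$ is a symmetry, so $s_r=s_c$. One would then integrate the sign of $\partial(s_r-s_c)/\partial k(o_r,o_j)$ along the path. The direct comparison above makes this unnecessary, so I would present only the direct comparison.
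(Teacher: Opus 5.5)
Your proposal is correct and follows the paper's proof essentially step for step: cancel $D(\Omega)$ to get $s_r-s_c=D(\Omega\setminus\{o_c\})-D(\Omega\setminus\{o_r\})$, then compare term by term, using the strictly smaller mass of the distinguished element ($o_r$ versus $o_c$) and the weakly smaller masses of the common elements $o_j$, $j\notin\{r,c\}$, to conclude via monotonicity of $g$. Your remark that the strict conclusion needs $g'<0$, not merely the non-increasing $g$ of Assumption~\ref{assump:reg_g_k} (a constant $g$ would give $s_r=s_c$), matches the paper: its appendix proof explicitly assumes $g$ is strictly decreasing.
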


In summary, Theorem~\ref{thm:influence_main} quantifies how injecting a single trajectory perturbs the population-level diversity functional through its influence. A trajectory is favored when it lies in low-mass regions, and it is penalized according to its similarity-induced overlap with the existing distribution. On a finite candidate set, Theorems~\ref{thm:si_decomp_main}--\ref{thm:si_dom_main} further establish that the leave-one-out margin $s_i$, defined as the exact marginal decrease of the same objective when removing $o_i$, is strictly decreasing in every pairwise similarity $k(o_i,o_t)$. Therefore, $s_i$ provides a principled, anti-redundant ranking signal whose optimization is directly aligned with increasing the diversity objective under the induced empirical distribution, rather than serving as a heuristic.
\section{Numerical Experiments}

\subsection{General Mathematical Reasoning}

\paragraph{Backbone Models.}
We employ two instruction-tuned  backbone models of varying scales to verify the generalizability of our approach: Qwen2.5-Math-7B, Qwen2.5-Math-1.5B \cite{yang2024qwen25}. Our implementation is built upon the open-source veRL framework~\cite{sheng2024hybridflow} to ensure efficient training and flexible rollout management.

\paragraph{Baselines.}
We benchmark our proposed method SetPO+GRPO, SetPO+GSPO and SetPO+DAPO against four representative reinforcement learning baselines: GRPO~\cite{shao2024deepseek}, GSPO \cite{zheng2025group}, DAPO~\cite{yu2025dapo} and R1-Zero-Div \cite{yao2025diversity}. To ensure a rigorous and fair comparison, we strictly unify the experimental protocols across all methods; specifically, we utilize identical hyperparameters including learning rate, batch size, and total training steps.
\begin{figure}
    \centering
    \includegraphics[width=0.99\linewidth]{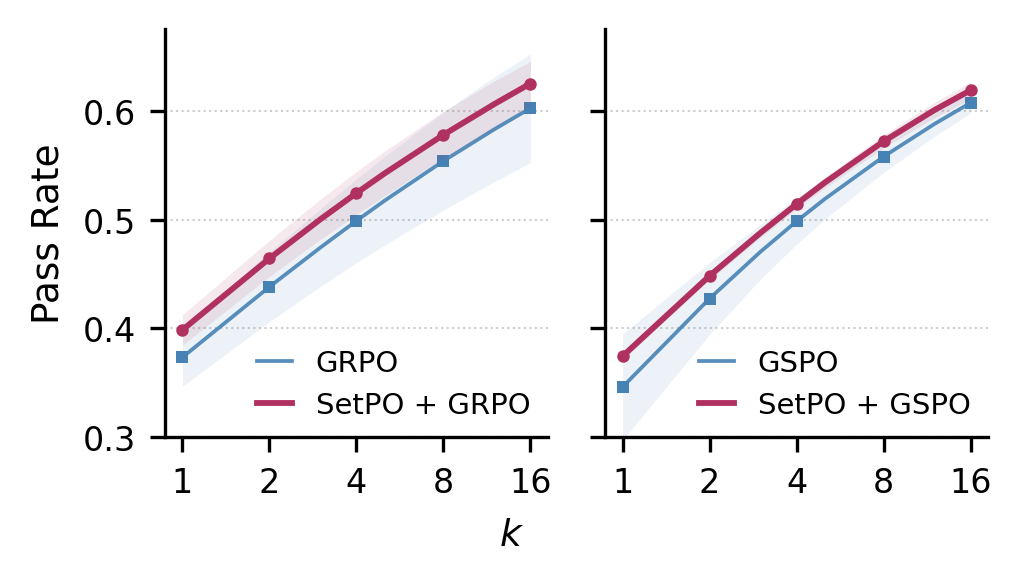}
    \caption{Performance of SetPO+GRPO and SetPO+GSPO on the Olympiad benchmark. Shaded regions indicate variance across runs. Both methods improve performance and reduce variability, suggesting that SetPO stabilizes the training dynamics.}
    \label{fig:variance}
\end{figure}
\paragraph{Training Configuration.}
All models are trained using the standard GSM8K training split~\cite{cobbe2021training}. We optimize the models for 2 epochs using the AdamW optimizer with a constant learning rate of $3\times10^{-6}$ and a global batch size of 64. During the online data generation phase, we set the rollout number to 6 per prompt and utilize a sampling temperature of 0.9 to encourage diverse exploration in the solution space. More concrete experiment settings can be found in Appendix \ref{appendix:setting}.
\begin{figure*}[t]
    \centering
    \includegraphics[width=0.99\linewidth]{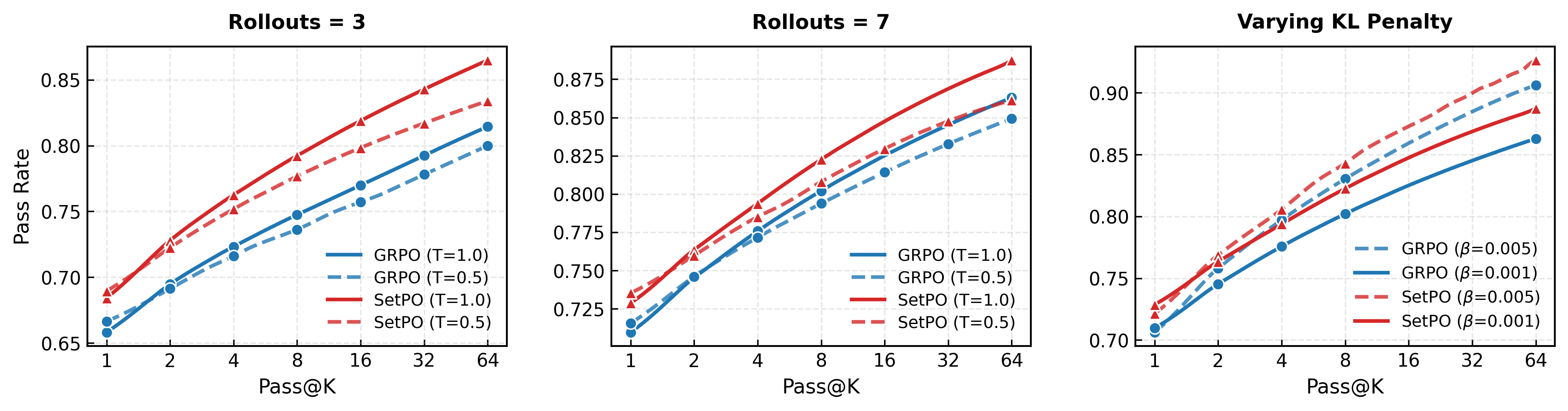}
    \caption{Pass@K performance on the Countdown task for SetPO (SetPO+GRPO) versus the GRPO baseline under varying decoding temperatures, training rollout counts, and KL penalties. SetPO consistently achieves higher Pass@K across these settings.}
    \label{fig:countdown}
\end{figure*}
\paragraph{Evaluation Configuration.}
We assess performance across a comprehensive suite of mathematical reasoning benchmarks, including GSM8K test, MATH500~\cite{hendrycks2021measuring}, Olympiad \cite{he2024olympiadbench}, college math bench~\cite{tang2024mathscale}, AMC 2023~\cite{MAA_AMC_2023}, and AIME 24/25~\cite{MAA_AIME_2025}. To ensure statistical robustness, all reported results are averaged over five random seeds. Unless otherwise specified, we report Pass@1 accuracy derived from Avg@64 (i.e., averaging over 64 sampled generations) with an inference-time temperature of 0.5. For the college math bench, we adopt Avg@16 for efficiency due to the significantly larger size of its evaluation set.
\begin{figure}[t]
    \centering
    \includegraphics[width=0.99\linewidth]{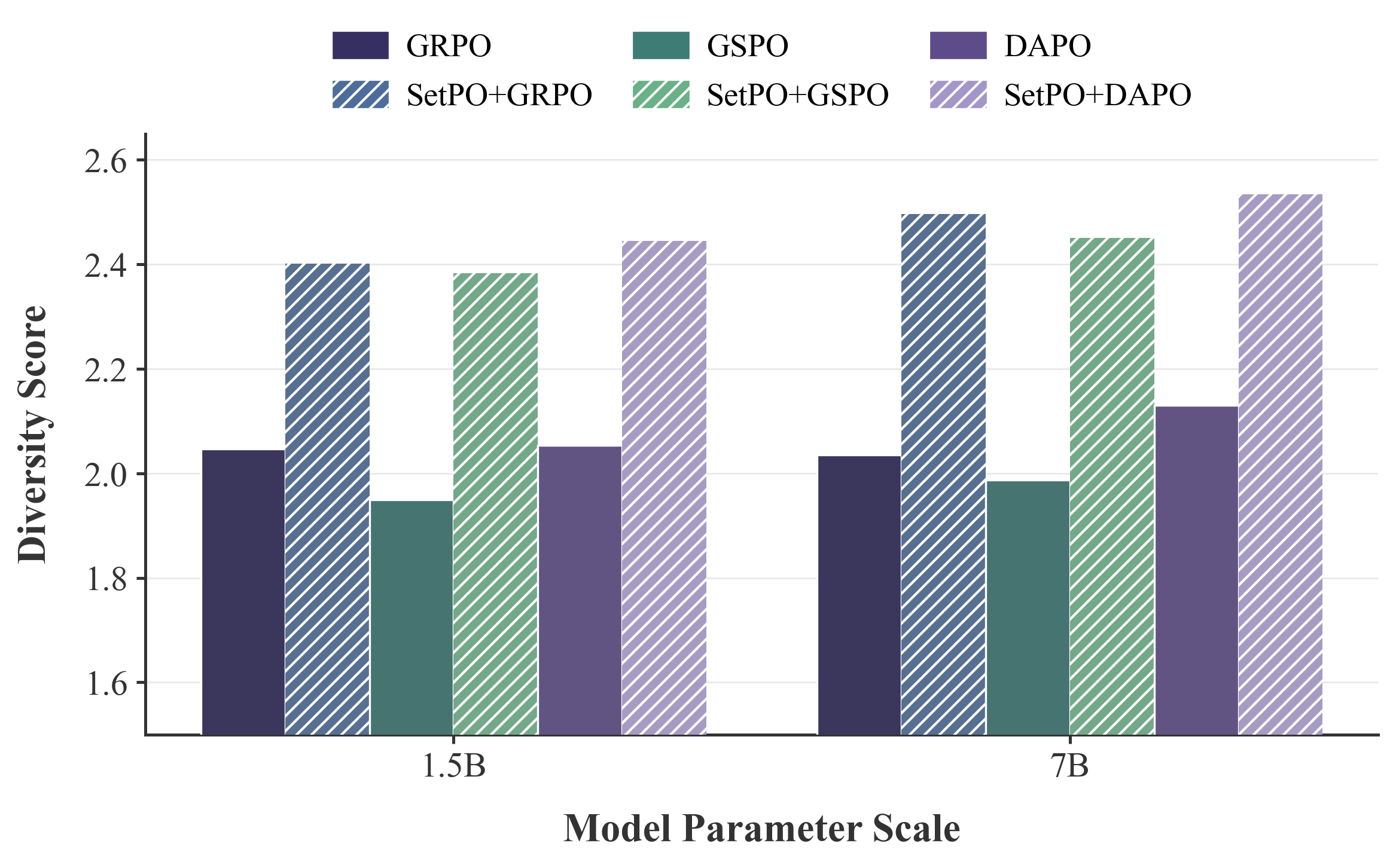}
    \caption{Diversity scores across the AIME24 benchmark, computed per problem from 16 Gemini-2.5-Flash generated solutions. Scores range from 1 (lowest diversity) to 5 (highest diversity).}
    \label{fig:diversity score}
\end{figure}

\paragraph{Numerical Results.}
Tables~\ref{tab:pass1_math_7b} and~\ref{tab:pass1_math_15b} summarize the Pass@1 results for the 7B and 1.5B models, respectively. Overall, \PO{} achieves large margin gains over baseline methods across model scales and benchmark difficulties. In particular, \PO{} consistently improves GRPO, GSPO, and DAPO across benchmarks and model scales. As shown in Table~\ref{tab:pass1_math_7b}, \PO{}+DAPO achieves the best overall performance with an average Pass@1 accuracy of 53.8\%. Notably, the most pronounced improvement is observed when applying \PO{} to GRPO, which increases the average accuracy from 47.2\% to 52.8\% (+5.6). On challenging benchmarks, \PO{} yields strong relative gains; for example, on AIME24, accuracy improves from 15.7\% to 21.6\%, corresponding to a relative increase of over 40\%. Table~\ref{tab:pass1_math_15b} further demonstrates a similar trend on the 1.5B models, where \PO{}+DAPO achieves the best overall average accuracy of 46.7\%. In addition, as shown in Figure~\ref{fig:variance}, our method significantly reduces the variance of the baselines and yields consistent performance gains across different random seeds.
\paragraph{Diversity Assessment.}
To assess the diversity of the generated answers, we utilize Gemini-2.5-Flash as an LLM evaluator. The scoring adopts a 5-point scale, ranging from minimum diversity 1 to maximum diversity 5. The full evaluation prompt is provided in Appendix \ref{appendix: diversity prompt}. Figure~\ref{fig:diversity score} shows that SetPO-enhanced methods consistently achieve higher diversity scores than the baselines across both 1.5B and 7B model scales, demonstrating that our algorithm effectively improves the diversity of final outputs. 

\subsection{Countdown Evaluation}

\paragraph{Task Description.}
The Countdown task is a rigorous benchmark for arithmetic reasoning and constraint satisfaction. In this task, the model is provided with a target number and a set of input numbers. The goal is to generate a valid mathematical expression using the input numbers and basic arithmetic operations that evaluates to the target. This task challenges the model's ability to perform search in a discrete action space and verify its own reasoning steps.
\begin{figure*}
    \centering
    \includegraphics[width=0.99\linewidth]{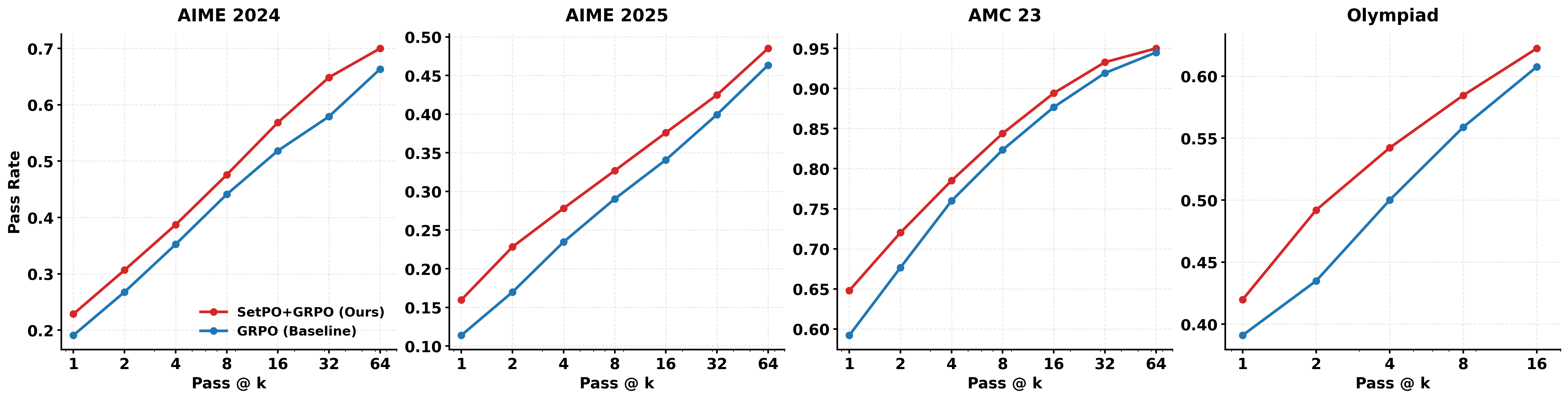}
    \caption{Pass@k performance of SetPO+GRPO versus the GRPO baseline on Qwen2.5-32B. SetPO+GRPO consistently outperforms GRPO across all k, demonstrating robust gains under varying sampling budgets.}
    \label{fig:32B}
\end{figure*}
\paragraph{Model and Training.}
We conduct experiments on the countdown dataset~\cite{pang2023language} using Qwen2.5-3B as the backbone model. During training, we optimize all models with a global batch size of 128 and a maximum generation length of 2048. To enforce strict adherence to the required answer format, our rule-based reward function incorporates an auxiliary format reward term with a weight of 0.1. We explore varied hyperparameter settings, specifically considering rollout numbers in $\{3, 7\}$ and KL divergence coefficients of $\{0.001, 0.005\}$. Detailed configurations are provided in Appendix \ref{appendix:setting}.

\paragraph{Evaluation Protocol.}
During inference, we report the unbiased Pass@$k$ metric for $k \in \{1, \dots, 64\}$. To provide a comprehensive analysis of the model's behavior, we investigate the impact of sampling temperature by evaluating performance at $T=0.5$ and $T=1.0$, and discuss how sample size variations influence empirical conclusions. 

\paragraph{Performance Superiority.}
As illustrated in Figure~\ref{fig:countdown}, SetPO consistently outperforms the GRPO baseline on the Countdown task, demonstrating superior sample efficiency and solution quality. This performance advantage remains consistent under both 3 rollout and 7 rollout settings. Notably, the performance gap widens as $k$ increases, indicating that SetPO is particularly effective at covering more rewarding modes compared to the baseline. Furthermore, our method maintains a distinct advantage across diverse sampling configurations. While a high temperature generally yields improved pass rates for both methods by encouraging broader exploration, SetPO consistently surpasses GRPO at temperatures of both $0.5$ and $1.0$. Finally, the analysis of varying KL penalties shown in the right panel of Figure~\ref{fig:countdown} further validates the stability of SetPO, showing that it yields robust performance improvements over the baseline irrespective of the regularization strength.

\paragraph{Diversity Analysis.}
To systematically evaluate the output diversity, we analyze the statistics presented in Table~\ref{tab:rollout_stats}. We introduce two primary metrics: (1) diversity width, defined as the count of samples for which the model generates at least two distinct correct solutions; and (2) average mode, which calculates the average number of unique valid answers specifically for those instances identified diverse.

The empirical results demonstrate that integrating SetPO significantly expands the solution space compared to the GRPO baseline. In the 7-rollout setting, SetPO+GRPO increases the diversity width from 351 to 415, indicating a robust capability to discover alternative reasoning paths. Furthermore, the average mode rises from 2.3 to 3.5. This quantitative improvement confirms that SetPO+GRPO actively promotes a broader coverage of valid solutions.

\begin{table}[t]
  \centering
  \caption{Diversity statistics for the Countdown task. Diversity width measures the count of problems yielding at least two different outputs. Average mode represents the mean number of distinct answers per question. Sampling temperature is set to 1.0.}
  \label{tab:rollout_stats}
  \begin{tabular}{lcc}
    \toprule
    \textbf{Setting} & \textbf{Diversity} & \textbf{Average} \\
                     & \textbf{Width}     & \textbf{Mode} \\
    \midrule
    Rollout 7 SetPO+GRPO  & \textbf{415} & \textbf{3.5} \\
    Rollout 7 GRPO & 351 & 2.3 \\
    \midrule
    Rollout 3 SetPO+GRPO  & \textbf{403} & \textbf{3.2} \\
    Rollout 3 GRPO & 321 & 2.1 \\
    \bottomrule
  \end{tabular}
\end{table}

\subsection{Scaling Up to 32B Models}

\paragraph{Experimental Settings.}
We further validate the scalability of our approach by extending the experiments to the 32B parameter scale, utilizing Qwen2.5-32B \cite{team2024qwen2} as the backbone model. The models are fine-tuned on the challenging hard subset of Simple-RL Zoo \cite{zeng2025simplerl}. To ensure stable optimization and effective exploration at this scale, we configure the global batch size to 1024 and the PPO mini-batch size to 256. During the inference rollout phase, we generate 6 trajectories per prompt with a sampling temperature of 0.1 to maintain a balance between diversity and precision.

\paragraph{Numerical Performance.}
Figure \ref{fig:32B} verifies the scalability of SetPO on 32B-parameter models, where it consistently outperforms the GRPO baseline across all four mathematical benchmarks. Specifically, SetPO shows consistent advantages in both single-sample and multi-sample regimes. In the Pass@1 setting, our method achieves a clear accuracy improvement over the baseline, especially on the more challenging AIME 2025 and Olympiad tasks, suggesting stronger reasoning quality. On hard benchmarks such as AIME24 and AIME25, SetPO provides an almost 5 percentage-point gain in Pass@1 compared to vanilla GRPO. Moreover, as the sampling budget $k$ increases, SetPO consistently preserves this performance gap. These results indicate that SetPO not only improves the correctness of individual solutions, but also enhances the diversity and robustness of the generation distribution for complex reasoning tasks.

\section{Conclusion}
We introduced SetPO, a set-level policy optimization method that promotes output diversity by rewarding trajectories according to their leave-one-out marginal contribution within the rollout set. SetPO estimates a local kernel density induced by semantic trajectory similarity and converts each trajectory’s density-sensitive marginal effect into an interpretable leave-one-out diversity credit, which can be incorporated into standard group-based policy optimization. We analyze the diversity functional under distributional perturbations, establishing bounds and a monotonic mechanism showing that rarer modes yield larger diversity gains. Extensive experiments across a wide range of model sizes on mathematical reasoning and combinatorial tasks show that SetPO consistently outperforms strong baselines, achieving higher accuracy and maintaining solution diversity. 

\section*{Impact Statement}
This paper presents work whose goal is to advance the field of reinforcement learning. There are many potential societal consequences of our work, none
which we feel must be specifically highlighted here.

\bibliography{example_paper}
\bibliographystyle{icml2026}

\newpage

\appendix
\onecolumn

\section{Brief Introduction to DAPO and GSPO}

\paragraph{DAPO.}
Decoupled Clip and Dynamic sAmpling Policy Optimization (DAPO) is a GRPO-style RL algorithm tailored to verifiable reasoning tasks, where for each prompt $q$ one samples a group of responses $\{o_i\}_{i=1}^G$ from the behavior policy $\pi_{\theta_{\text{old}}}$ and optimizes a clipped surrogate objective with \emph{asymmetric} clipping ranges to mitigate entropy collapse (``Clip-Higher''). Concretely, DAPO maximizes
\begin{equation*}
J_{\text{DAPO}}(\theta)=
\mathbb{E}_{(q,a)\sim\mathcal{D},\,\{o_i\}_{i=1}^G\sim\pi_{\theta_{\text{old}}}(\cdot|q)}
\Bigg[
\frac{1}{\sum_{i=1}^G |o_i|}
\sum_{i=1}^G \sum_{t=1}^{|o_i|}
\min\Big(r_{i,t}(\theta)\,\bar A_{i,t},\ \mathrm{clip}(r_{i,t}(\theta),\,1-\epsilon_{\text{low}},\,1+\epsilon_{\text{high}})\,\bar A_{i,t}\Big)
\Bigg],
\end{equation*}
subject to the dynamic-sampling constraint
\begin{equation*}
0 < \big|\{\,o_i:\mathrm{is\_equivalent}(a,o_i)\,\}\big| < G.
\end{equation*}
Here $r_{i,t}(\theta)=\frac{\pi_\theta(o_{i,t}\mid q,o_{i,<t})}{\pi_{\theta_{\text{old}}}(o_{i,t}\mid q,o_{i,<t})}$ is the token-level importance ratio, and $\hat A_{i,t}$ is a group-normalized advantage (shared across tokens within the same response) computed from outcome rewards $\{R_i\}_{i=1}^G$ via $\hat A_{i,t}=\frac{R_i-\mathrm{mean}(\{R_i\})}{\mathrm{std}(\{R_i\})}$.

\paragraph{GSPO.}
Group Sequence Policy Optimization (GSPO) addresses a mismatch in GRPO between token-level importance ratios and sequence-level rewards by defining the importance ratio at the \emph{sequence} level and applying \emph{sequence-level} clipping. GSPO maximizes
\begin{equation*}
J_{\text{GSPO}}(\theta)=
\mathbb{E}_{x\sim\mathcal{D},\,\{y_i\}_{i=1}^G\sim\pi_{\theta_{\text{old}}}(\cdot|x)}
\left[
\frac{1}{G}\sum_{i=1}^G
\min\Big(s_i(\theta)\,\bar A_i,\ \mathrm{clip}(s_i(\theta),\,1-\epsilon,\,1+\epsilon)\,\bar A_i\Big)
\right],
\end{equation*}
where the group-based advantage is $\bar A_i=\frac{r(x,y_i)-\mathrm{mean}(\{r(x,y_i)\})}{\mathrm{std}(\{r(x,y_i)\})}$ and the (length-normalized) sequence-level importance ratio is
\begin{equation*}
s_i(\theta)=\left(\frac{\pi_\theta(y_i\mid x)}{\pi_{\theta_{\text{old}}}(y_i\mid x)}\right)^{\!\frac{1}{|y_i|}}
=\exp\left(\frac{1}{|y_i|}\sum_{t=1}^{|y_i|}\log\frac{\pi_\theta(y_{i,t}\mid x,y_{i,<t})}{\pi_{\theta_{\text{old}}}(y_{i,t}\mid x,y_{i,<t})}\right).
\end{equation*}
This design aligns the optimization unit (sequence) with the reward granularity, reducing variance and improving stability, especially for long responses and large-scale training.

\section{Our Algorithm}
The target function used for SetPO+GRPO is given as
\begin{align}
\begin{aligned}
J(\theta)
=&
\mathbb E\Bigg[
\frac1G\sum_{i=1}^{G}\frac1{|o_i|}\sum_{t=1}^{|o_i|}
\min\Big(
\rho_{i,t}(\theta)\, \hat A_i,\  \mathrm{clip}(\rho_{i,t}(\theta),1-\varepsilon,1+\varepsilon)\,\hat A_i
\Big)
\;-\;\\&
\beta\,
\mathrm{KL}\!\left(
\pi_\theta(\cdot\mid x,a_{i,<t})\ \big\|\ \pi_{\mathrm{ref}}(\cdot\mid x,a_{i,<t})
\right)
\Bigg],
\end{aligned}
\label{eq:diversity-grpo-loss}
\end{align}
where $\hat A_i$ combines the original advantage $\bar A_i$ with our leave-one-out marginal, calculated by \eqref{eq: leave one out bonus}. The complete algorithm of our SetPO+GPRO is given in Algorithm \ref{alg:set_grpo}.
\begin{algorithm}[t]
\caption{Set-Level Policy Optimization}
\label{alg:set_grpo}
\begin{algorithmic}[1]
\Require Prompt dataset $\mathcal{D}$, Policy $\pi_\theta$, Reference model $\pi_{\mathrm{ref}}$, Group size $G$, Diversity coefficient $\lambda$, KL coefficient $\beta$, Learning rate $\eta$.
\While{not reach max iteration}
    \State Sample a batch of prompts $\Omega = \{x_1, \dots, x_M\}$ from $\mathcal{D}$.
    \For{each prompt $x \in \Omega$}
        \State \textbf{Sampling:}
        \State Generate $G$ outputs $A = \{o_1, \dots, o_G\}$ where $o_i \sim \pi_{\theta_{\mathrm{old}}}(\cdot \mid x)$.
        
        \State \textbf{Reward Calculation:}
        \State Compute task rewards $\{r_1, \dots, r_G\}$ for each output.
        \State Compute base advantages $\bar{A}_i$ (e.g., via group normalization: $\bar{A}_i = \frac{r_i - \operatorname{mean}(r)}{\operatorname{std}(r) + \epsilon}$).
        
        \State \textbf{Diversity Estimation (Leave-one-out):}
        \State Compute pairwise similarity matrix $K \in \mathbb{R}^{G \times G}$ where $K_{ij} = k(o_i, o_j)$.
        \State Calculate group score $D(A) = \frac{1}{G}\sum_{i=1}^G g(\widehat{m}_i)$.
        \For{$i = 1$ to $G$}
            \State Calculate reduced-set score $D(\Omega\setminus\{o_i\})$ using Eq.~\ref{sec: reward} (recomputing local masses).
            \State Compute diversity contribution $s_i = D(\Omega) - D(\Omega\setminus\{o_i\})$.
        \EndFor
        
        \State Compute augmented advantage: $\hat{A}_i \leftarrow \bar{A}_i + \lambda \cdot s_i$.
    \EndFor
    
    \State \textbf{Policy Optimization:}
    \State Maximize the surrogate objective $J(\theta)$ by \eqref{eq:diversity-grpo-loss} on the collected batch as
    $\theta \leftarrow \theta + \eta \nabla_\theta J(\theta).$
\EndWhile
\end{algorithmic}
\end{algorithm}

\section{Theoretical Framework of $\mathcal{F}(P)$: Trajectory Diversity}\label{appendix: continuous}

In this section, we establish the mathematical foundations of our diversity measure. We begin by defining the diversity measure for large language models, derive its sensitivity to perturbations, establish a rigorous lower bound for finite-step updates, and finally discuss its thermodynamic properties in relation to decoding temperature.

\subsection{Preliminaries and Definitions}
Let $\Omega$ be the trajectory space. We employ a bounded, symmetric similarity kernel $k:\Omega\times\Omega\to[0,1]$ with $k(y,y)=1$. For any distribution $P\in\mathcal P(\Omega)$, the \emph{kernelized local mass} at $y$ is defined as:
\begin{equation}
m_P(y) := \mathbb E_{y'\sim P}[k(y,y')] \in [0,1].
\end{equation}
This quantity estimates the semantic density of the distribution $P$ around trajectory $y$. The global diversity functional is defined as:
\begin{equation}
\mathcal F(P) := \mathbb E_{y\sim P}\Big[g\big(m_P(y)\big)\Big],
\end{equation}
where the shape function $g(x):[0,1]\to\mathbb R$. A canonical choice used in this work is $g(x)=-\log(1+x)$, which is smooth and strictly decreasing.

\begin{assumption}
\label{ass:bounded-slope}
The kernel function $k(\cdot,\cdot)$ is measurable and symmetric, i.e. $k(x,y)=k(y,x)$. Besides, it holds that $0\le k(x,y)\le 1$ for all $x,y$. The shape function $g(x):[0,1]\to\mathbb R$ is a continuous, strictly decreasing shaping function and there exists $L<\infty$ such that $\|g'(a) - g'(b)\| \le L\|a-b\|$ for all $a, b\in[0,1]$.
\end{assumption}

For a certain trajectory $\tau\in\Omega$ and consider the mixture perturbation 
\begin{equation}
P_\varepsilon\;:=\;(1-\varepsilon)P+\varepsilon\,\delta_\tau,\qquad \varepsilon\in[0,1].
\end{equation}
We study the finite-$\varepsilon$ change $\mathcal F(P_\varepsilon)-\mathcal F(P)$ and its relation to the original rarity
$m_P(\tau)$. For any $y\in\Omega$, it holds that
\begin{align}
\begin{aligned}
m_{P_\varepsilon}(y) &=\mathbb E_{y'\sim P_\varepsilon}k(y,y') \
=(1-\varepsilon)\mathbb E_{y'\sim P}k(y,y')+\varepsilon \mathbb E_{y'\sim \delta_\tau} k(y,y') \\
&=(1-\varepsilon)m_P(y)+\varepsilon k(y,\tau)
=m_P(y)+\varepsilon\Big(k(y,\tau)-m_P(y)\Big). 
\end{aligned}
\label{eq: m_p_eps}
\end{align}
Define the \emph{similarity offset}
\begin{equation*}
d_\tau(y)\;:=\;k(y,\tau)-m_P(y).
\end{equation*}
Then \eqref{eq: m_p_eps} becomes
\begin{equation}
m_{P_\varepsilon}(y)=m_P(y)+\varepsilon\,d_\tau(y).
\label{eq: perturb m}
\end{equation}
In particular, using $k(\tau,\tau)=1$, it yields
\begin{equation*}
m_{P_\varepsilon}(\tau)
=(1-\varepsilon)m_P(\tau)+\varepsilon
=m_P(\tau)+\varepsilon\big(1-m_P(\tau)\big).
\end{equation*}
It is immediate from this expression that rarer samples (smaller $m_P(\tau)$) receive a larger injection boost, since the increment $m_{P_\varepsilon}(\tau)-m_P(\tau)=\varepsilon\big(1-m_P(\tau)\big)$ increases as $m_P(\tau)$ decreases.

We next perform decomposition of the finite-$\varepsilon$ diversity change of $\mathcal{F}(P)$.
By definition,
\begin{align*}
\mathcal F(P_\varepsilon)
=\mathbb E_{y\sim P_\varepsilon}\Big[g\big(m_{P_\varepsilon}(y)\big)\Big]
=(1-\varepsilon)\mathbb E_{y\sim P}\Big[g\big(m_{P_\varepsilon}(y)\big)\Big]
+\varepsilon\,g\big(m_{P_\varepsilon}(\tau)\big).
\end{align*}
Decomposing $\mathcal{F}(P)$ into an $\epsilon$-part and another $(1-\epsilon)$-part and rearranging yields the {exact} identity
\begin{equation}
\mathcal F(P_\varepsilon)-\mathcal F(P)
=
\underbrace{\varepsilon\Big(g\big(m_{P_\varepsilon}(\tau)\big)-\mathcal F(P)\Big)}_{\text{main term: contribution of injected }\tau}
+
\underbrace{(1-\varepsilon)\,\mathbb E_{y\sim P}\!\left[
g\big(m_{P_\varepsilon}(y)\big)-g\big(m_P(y)\big)
\right]}_{\text{perturbation on existing samples}}.
\label{eq:exact-decomp}
\end{equation}
Using $m_{P_\varepsilon}(y)=m_P(y)+\varepsilon d_\tau(y)$ from \eqref{eq: perturb m},
\begin{equation}
\mathcal F(P_\varepsilon)-\mathcal F(P)
=
\varepsilon\Big(g\big(m_{P_\varepsilon}(\tau)\big)-\mathcal F(P)\Big)
+(1-\varepsilon)\,\mathbb E_{y\sim P}\!\left[
g\big(m_P(y)+\varepsilon d_\tau(y)\big)-g\big(m_P(y)\big)
\right].
\label{eq:exact-decomp-2}
\end{equation}

\subsection{Infinitesimal Analysis: The Influence Function}
We now analyze how $\mathcal{F}$ responds to infinitesimal changes. This characterizes the "gradient" of diversity in the space of distributions, indicating which trajectories should be upweighted to maximally increase diversity.

Consider the mixture perturbation $P_\varepsilon := (1-\varepsilon)P+\varepsilon\,\delta_\tau$ in the limit as $\varepsilon \to 0^+$.

\begin{theorem}[Diversity Influence Function]
\label{thm:influence}
The Gâteaux derivative of $\mathcal{F}$ at $P$ in the direction of a trajectory $\tau$ is given by:
\begin{equation}
\mathcal{I}(\tau; P) := \frac{d}{d\varepsilon} \mathcal{F}(P_\varepsilon) \bigg|_{\varepsilon=0} = \underbrace{g(m_P(\tau))}_{\text{Intrinsic Novelty}} + \underbrace{\mathbb{E}_{y \sim P} \left[ g'(m_P(y)) k(y, \tau) \right]}_{\text{Interaction Cost}} - \Psi(P),
\end{equation}
where $\Psi(P)$ is a scalar baseline independent of $\tau$.
\end{theorem}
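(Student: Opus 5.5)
We will differentiate the exact decomposition \eqref{eq:exact-decomp-2} at $\varepsilon=0$, which needs no further structural input. Writing
\[
\mathcal F(P_\varepsilon)-\mathcal F(P)=\varepsilon\Big(g\big(m_{P_\varepsilon}(\tau)\big)-\mathcal F(P)\Big)+(1-\varepsilon)\,R(\varepsilon),
\qquad R(\varepsilon):=\mathbb E_{y\sim P}\!\left[g\big(m_P(y)+\varepsilon d_\tau(y)\big)-g\big(m_P(y)\big)\right],
\]
we divide by $\varepsilon$ and let $\varepsilon\to0^+$. The derivative is one-sided, since $\varepsilon\in[0,1]$.

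\textbf{First term.} By \eqref{eq: perturb m} and $k(\tau,\tau)=1$ we have $m_{P_\varepsilon}(\tau)=m_P(\tau)+\varepsilon(1-m_P(\tau))\to m_P(\tau)$. Continuity of $g$ then gives the limit $g(m_P(\tau))-\mathcal F(P)$.

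\textbf{Second term.} Since $R(0)=0$ and $(1-\varepsilon)\to1$, it suffices to compute $R'(0^+)=\lim_{\varepsilon\to0^+}R(\varepsilon)/\varepsilon$. For each fixed $y$, the mean value theorem gives
\[
\frac{g\big(m_P(y)+\varepsilon d_\tau(y)\big)-g\big(m_P(y)\big)}{\varepsilon}=g'(\xi_\varepsilon(y))\,d_\tau(y),
\]
with $\xi_\varepsilon(y)$ between $m_P(y)$ and $m_P(y)+\varepsilon d_\tau(y)$. This interval stays inside $[0,1]$, because it is a convex combination of $m_P(y)$ and $k(y,\tau)$, both of which lie in $[0,1]$. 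Hence $g'$ is only ever evaluated on $[0,1]$. The integrand converges pointwise to $g'(m_P(y))\,d_\tau(y)$, either by continuity of $g'$ (Assumption~\ref{ass:bounded-slope} gives Lipschitz $g'$), or, with the Lipschitz constant, via the explicit bound $|g'(\xi)-g'(m_P(y))|\le L\varepsilon$. The integrand is dominated by $\sup_{[0,1]}|g'|\cdot|d_\tau(y)|\le\sup|g'|$, since $|d_\tau|\le1$. Dominated convergence therefore yields
\[
R'(0^+)=\mathbb E_{y\sim P}\big[g'(m_P(y))\,k(y,\tau)\big]-\mathbb E_{y\sim P}\big[g'(m_P(y))\,m_P(y)\big].
\]
Measurability of $y\mapsto m_P(y)$ follows from measurability of $k$ and Fubini/Tonelli, so all these expectations are well defined.

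\textbf{Assembly.} Summing the two pieces gives
\[
\mathcal I(\tau;P)=g(m_P(\tau))+\mathbb E_{y\sim P}\big[g'(m_P(y))k(y,\tau)\big]-\Psi(P),
\qquad
\Psi(P):=\mathcal F(P)+\mathbb E_{y\sim P}\big[g'(m_P(y))\,m_P(y)\big],
\]
and $\Psi(P)$ visibly does not depend on $\tau$.

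The only genuinely delicate step is justifying the interchange of limit and expectation in $R(\varepsilon)/\varepsilon$. The plan resolves it with the uniform bound on $g'$ over $[0,1]$, which holds under either Assumption~\ref{assump:reg_g_k} (bounded $g'$, continuous) or Assumption~\ref{ass:bounded-slope} (Lipschitz $g'$, hence bounded on a compact set). The Lipschitz form also gives a quantitative $O(\varepsilon)$ remainder, which we may record for the later finite-step bounds.
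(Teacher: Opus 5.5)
Your proposal is correct and follows essentially the same route as the paper: differentiate the exact decomposition \eqref{eq:exact-decomp-2} term by term. For the injection term you use $m_{P_\varepsilon}(\tau)=m_P(\tau)+\varepsilon(1-m_P(\tau))$ together with continuity of $g$, and for the perturbation term you use dominated convergence, arriving at the same baseline $\Psi(P)=\mathcal F(P)+\mathbb E_{y\sim P}[g'(m_P(y))\,m_P(y)]$. Your explicit domination bound, obtained from the mean value theorem on the convex combination $(1-\varepsilon)m_P(y)+\varepsilon k(y,\tau)\in[0,1]$, makes precise the interchange step that the paper justifies only by citing smoothness of $g$ and boundedness of the domain.
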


\begin{proof}
Recall the exact decomposition of the finite difference derived in \eqref{eq:exact-decomp-2}:
\begin{equation*}
\mathcal F(P_\varepsilon)-\mathcal F(P)
=
\varepsilon\Big(g\big(m_{P_\varepsilon}(\tau)\big)-\mathcal F(P)\Big)
+(1-\varepsilon)\,\mathbb E_{y\sim P}\!\left[
g\big(m_P(y)+\varepsilon d_\tau(y)\big)-g\big(m_P(y)\big)
\right].
\end{equation*}
Dividing both sides by $\varepsilon$ and taking the limit $\varepsilon \to 0^+$, we analyze the two terms on the right-hand side separately.

\textbf{The Injection Term.}
By the continuity of the kernel, $\lim_{\varepsilon \to 0} m_{P_\varepsilon}(\tau) = m_P(\tau)$. Thus, the first term becomes:
\begin{equation}
\lim_{\varepsilon \to 0^+} \frac{\varepsilon\Big(g\big(m_{P_\varepsilon}(\tau)\big)-\mathcal F(P)\Big)}{\varepsilon}
= g\big(m_P(\tau)\big) - \mathcal F(P).
\label{eq:proof-term1}
\end{equation}

\textbf{The Perturbation Term.}
For the second term, we apply the limit to the expectation. Since $g$ is continuously differentiable and the domain is bounded, the Dominated Convergence Theorem allows us to interchange the limit and the integral:
\begin{align}
\lim_{\varepsilon \to 0^+} (1-\varepsilon) \mathbb{E}_{y\sim P} \left[\frac{g\big(m_P(y)+\varepsilon d_\tau(y)\big)-g\big(m_P(y)\big)}{\varepsilon} \right]
&= 1 \cdot \mathbb{E}_{y\sim P} \left[  \frac{d}{d\varepsilon} \Big[ g\big(m_P(y)+\varepsilon d_\tau(y)\big) \Big]_{\varepsilon=0}\right] \nonumber \\
&= \mathbb{E}_{y\sim P} \Big[ g'\big(m_P(y)\big) \cdot d_\tau(y) \Big].
\label{eq:proof-term2}
\end{align}
Substituting the definition of the similarity offset $d_\tau(y) = k(y,\tau) - m_P(y)$ into \eqref{eq:proof-term2}, we expand the interaction:
\begin{equation*}
\mathbb{E}_{y\sim P} \Big[ g'\big(m_P(y)\big) \big(k(y,\tau) - m_P(y)\big) \Big]
= \mathbb{E}_{y\sim P} \Big[ g'\big(m_P(y)\big) k(y,\tau) \Big] - \mathbb{E}_{y\sim P} \Big[ g'\big(m_P(y)\big) m_P(y) \Big].
\end{equation*}

Summing the results from the injection and perturbation terms:
\begin{align*}
\mathcal{I}(\tau; P) &= \Big( g(m_P(\tau)) - \mathcal{F}(P) \Big) + \Big( \mathbb{E}_{y\sim P} [ g'(m_P(y)) k(y,\tau) ] - \mathbb{E}_{y\sim P} [ g'(m_P(y)) m_P(y) ] \Big) \\
&= g(m_P(\tau)) + \mathbb{E}_{y\sim P} [ g'(m_P(y)) k(y,\tau) ] - \underbrace{\Big( \mathcal{F}(P) + \mathbb{E}_{y\sim P} [ g'(m_P(y)) m_P(y) ] \Big)}_{\Psi(P)}.
\end{align*}
The term $\Psi(P)$ depends solely on the reference distribution $P$ and is constant with respect to $\tau$.
\end{proof}

To determine the correlation between $\mathcal{I}(\tau;P)$ and $m_P(\tau)$, we show that provided $g$ is strictly decreasing, the interaction mechanism inherently cooperates with the intrinsic novelty to penalize redundancy.

Since $\Psi(P)$ is independent of $\tau$, for ranking or gradient-based reweighting it is often sufficient to use the uncentered score
\begin{equation*}
\tilde{\mathcal I}(\tau;P):= g(m_P(\tau)) + \mathbb{E}_{y \sim P} \!\left[ g'(m_P(y))\, k(y, \tau) \right].
\end{equation*}

The interaction term depends on $m_P(y)$ in the neighborhood of $\tau$
weighted by $k(y,\tau)$. To obtain a local monotonicity statement without a fragile \emph{relative} homogeneity assumption,
we use an \emph{absolute} smoothness control on an effective neighborhood and explicitly account for the kernel tail.

\begin{assumption}[Absolute local smoothness at kernel threshold $\eta$]
\label{ass:abs_local_smooth_final}
Fix a threshold $\eta\in(0,1)$. For $\tau\in\Omega$, define the effective neighborhood
\begin{equation*}
B_{\tau,\eta}:=\{y\in\Omega:\ k(y,\tau)\ge \eta\},
\end{equation*}
and assume there exists $\delta=\delta(\tau,\eta)\ge 0$ such that
\begin{equation*}
\sup_{y\in B_{\tau,\eta}} |m_P(y)-m_P(\tau)| \le \delta(\tau,\eta).
\end{equation*}
\end{assumption}

\begin{lemma}[Proxy reduction with an explicit remainder bound]
\label{lem:proxy_error_final}
Under Assumption~\ref{ass:bounded-slope} and \ref{ass:abs_local_smooth_final}, then for any $\eta\in(0,1)$ and any $\tau\in\Omega$,
\begin{equation*}
\tilde{\mathcal I}(\tau;P)=\phi(m_\tau)+r_\tau,
\qquad
\phi(m):=g(m)+m g'(m),\quad \text{where} \quad  m_\tau:=m_P(\tau),
\label{eq:proxy_decomp_final}
\end{equation*}
and the remainder satisfies
\begin{equation}
|r_\tau|
\le
L\,(\delta(\tau,\eta)\,m_\tau \;+\; \eta).
\label{eq:remainder_bound_final}
\end{equation}
\end{lemma}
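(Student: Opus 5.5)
The plan is to define the remainder exactly as $r_\tau:=\tilde{\mathcal I}(\tau;P)-\phi(m_\tau)$ and rewrite it as a single kernel-weighted expectation. The only structural fact needed is that, by the symmetry of $k$ in Assumption~\ref{ass:bounded-slope}, $m_\tau=m_P(\tau)=\mathbb E_{y\sim P}[k(\tau,y)]=\mathbb E_{y\sim P}[k(y,\tau)]$. Hence the proxy piece $m_\tau g'(m_\tau)$ can be written as $\mathbb E_{y\sim P}[g'(m_\tau)\,k(y,\tau)]$. Subtracting this from the interaction term of $\tilde{\mathcal I}$, the $g(m_\tau)$ terms cancel and we obtain
\begin{equation*}
r_\tau=\mathbb E_{y\sim P}\Big[\big(g'(m_P(y))-g'(m_\tau)\big)\,k(y,\tau)\Big].
\end{equation*}
This identity is exact and needs no assumptions beyond symmetry and integrability. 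Integrability holds because $k\in[0,1]$ and $g'$ is Lipschitz on the compact interval $[0,1]$, hence bounded.

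Next, split this expectation over the effective neighborhood $B_{\tau,\eta}$ and its complement $B_{\tau,\eta}^c$ from Assumption~\ref{ass:abs_local_smooth_final}, and bound each piece using the Lipschitz constant $L$ of $g'$.

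On $B_{\tau,\eta}$, Assumption~\ref{ass:abs_local_smooth_final} gives $|m_P(y)-m_\tau|\le\delta(\tau,\eta)$, so $|g'(m_P(y))-g'(m_\tau)|\le L\delta(\tau,\eta)$. Since $k\ge 0$, this piece is therefore at most
\begin{equation*}
L\delta(\tau,\eta)\,\mathbb E_{y\sim P}\big[k(y,\tau)\mathbf 1_{B_{\tau,\eta}}(y)\big]\le L\delta(\tau,\eta)\,\mathbb E_{y\sim P}[k(y,\tau)]=L\delta(\tau,\eta)\,m_\tau.
\end{equation*}
Here the factor $m_\tau$ appears because we enlarge the restricted expectation to the full one, which equals $m_\tau$ by the symmetry identity above.

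On $B_{\tau,\eta}^c$, we have $0\le k(y,\tau)<\eta$. Both $m_P(y)$ and $m_\tau$ lie in $[0,1]$, so $|g'(m_P(y))-g'(m_\tau)|\le L|m_P(y)-m_\tau|\le L$. This piece is therefore at most $L\eta\,P(B_{\tau,\eta}^c)\le L\eta$. Adding the two bounds via the triangle inequality yields $|r_\tau|\le L(\delta(\tau,\eta)m_\tau+\eta)$, which is \eqref{eq:remainder_bound_final}.

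The only step needing care is the first one: recognising $m_\tau g'(m_\tau)$ as the kernel-weighted average $\mathbb E[g'(m_\tau)k(y,\tau)]$. This is what lets the remainder be expressed as a difference of $g'$ values under the same weight $k(y,\tau)$, so that the Lipschitz property of $g'$ applies directly. Once that identity is in place, the rest is a routine two-region estimate: local smoothness controls the near region, and the smallness of the kernel tail controls the far region, using only the crude bound $|m_P(y)-m_\tau|\le 1$ there.
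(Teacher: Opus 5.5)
Your proposal is correct and follows essentially the same route as the paper: you write $m_\tau g'(m_\tau)$ as the kernel-weighted average $\mathbb E_{y\sim P}[g'(m_\tau)k(y,\tau)]$ to isolate $r_\tau=\mathbb E_{y\sim P}[(g'(m_P(y))-g'(m_\tau))k(y,\tau)]$. You then split over $B_{\tau,\eta}$ and its complement, bounding the two pieces by $L\delta(\tau,\eta)m_\tau$ and $L\eta$. Your version is slightly more careful than the paper's in two places: you note explicitly that $\mathbb E_{y\sim P}[k(y,\tau)]=m_P(\tau)$ uses the symmetry of $k$, and that the tail bound $|g'(m_P(y))-g'(m_\tau)|\le L$ comes from $m_P(y),m_\tau\in[0,1]$.
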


\begin{proof}
Write
\begin{align*}
\tilde{\mathcal I}(\tau;P)
&= g(m_\tau) + \mathbb E\!\left[g'(m_P(y))k(y,\tau)\right] \\
&= g(m_\tau) + g'(m_\tau)\mathbb E[k(y,\tau)]
+ \mathbb E\!\left[(g'(m_P(y))-g'(m_\tau))k(y,\tau)\right].
\end{align*}
Since $\mathbb E[k(y,\tau)]=m_\tau$, the first two terms equal $\phi(m_\tau)$.
Thus the remainder is
\[
r_\tau=\mathbb E\!\left[(g'(m_P(y))-g'(m_\tau))k(y,\tau)\right].
\]
Split the expectation over $B_{\tau,\eta}$ and its complement.
On $B_{\tau,\eta}$, Assumption~\ref{ass:abs_local_smooth_final} and lipschitzness of $g'$
give $|g'(m_P(y))-g'(m_\tau)|\le L|m_P(y)-m_\tau|\le L\delta$, hence
\[
\left|\mathbb E\!\left[(g'(m_P(y))-g'(m_\tau))k(y,\tau)\mathbf 1_{B_{\tau,\eta}}\right]\right|
\le L\delta\,\mathbb E[k(y,\tau)] = L\delta\,m_\tau.
\]
On $B_{\tau,\eta}^c$, we have $k(y,\tau)<\eta$ and $|g'(m_P(y))-g'(m_\tau)|\le L$, hence
\[
\left|\mathbb E\!\left[(g'(m_P(y))-g'(m_\tau))k(y,\tau)\mathbf 1_{B_{\tau,\eta}^c}\right]\right|
\le L\,\eta.
\]
Combining the two bounds proves~\eqref{eq:remainder_bound_final}.
\end{proof}

\begin{theorem}[Local monotonicity via a curvature condition]
\label{thm:local_mono_final}
Assume in addition that $g$ is twice continuously differentiable on $(0,1]$ and satisfies
\begin{equation}
m g''(m) < -2g'(m),\qquad \forall m\in(0,1].
\label{eq:curvature_condition_final}
\end{equation}
Moreover, fix any $a\in(0,1]$ and define
\begin{equation*}
c_a := \min_{m\in[a,1]}\big(-\phi'(m)\big) > 0.
\end{equation*}
For any $\tau_1,\tau_2$ with $m_P(\tau_1),m_P(\tau_2)\in[a,1]$, we have
\begin{equation}
\tilde{\mathcal I}(\tau_1;P)-\tilde{\mathcal I}(\tau_2;P)
\le
-c_a\big(m_P(\tau_1)-m_P(\tau_2)\big)
+\big(|r_{\tau_1}|+|r_{\tau_2}|\big),
\label{eq:ranking_bound_final}
\end{equation}
where $r_{\tau}$ is the remainder in Lemma~\ref{lem:proxy_error_final}.
In particular, whenever
\begin{equation*}
m_P(\tau_1)-m_P(\tau_2)>\frac{|r_{\tau_1}|+|r_{\tau_2}|}{c_a},
\end{equation*}
the ranking aligns with rarity:
\begin{equation*}
m_P(\tau_1)>m_P(\tau_2)\quad\Longrightarrow\quad
\tilde{\mathcal I}(\tau_1;P)<\tilde{\mathcal I}(\tau_2;P).
\end{equation*}
Thus, in regimes where the local smoothness error $\delta(\tau,\eta)$ and tail threshold $\eta$ are small,
the interaction term cooperates with the intrinsic novelty term to penalize redundancy.
\end{theorem}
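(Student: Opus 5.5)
The plan is to reduce everything to the one-dimensional proxy $\phi(m)=g(m)+mg'(m)$ supplied by Lemma~\ref{lem:proxy_error_final}. Once the ranking question is transferred to $\phi$, a mean-value argument in the scalar variable $m$ finishes the proof.

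\textbf{Step 1: strict decrease of $\phi$.} I first show that $\phi$ is strictly decreasing on $(0,1]$. Differentiating gives
\[
\phi'(m)=g'(m)+g'(m)+mg''(m)=2g'(m)+mg''(m).
\]
The curvature condition \eqref{eq:curvature_condition_final} says precisely that $mg''(m)<-2g'(m)$, so $\phi'(m)<0$ for every $m\in(0,1]$. Since $g$ is $C^2$ on $(0,1]$, $\phi'$ is continuous there. On the compact interval $[a,1]\subset(0,1]$ the continuous positive function $-\phi'$ therefore attains its minimum, so $c_a>0$ is well defined and
\[
-\phi'(m)\ge c_a \quad \text{for all } m\in[a,1].
\]

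\textbf{Step 2: the main inequality.} Write $m_1=m_P(\tau_1)$ and $m_2=m_P(\tau_2)$. By Lemma~\ref{lem:proxy_error_final}, $\tilde{\mathcal I}(\tau_\ell;P)=\phi(m_\ell)+r_{\tau_\ell}$ for $\ell=1,2$, so
\[
\tilde{\mathcal I}(\tau_1;P)-\tilde{\mathcal I}(\tau_2;P)=\phi(m_1)-\phi(m_2)+r_{\tau_1}-r_{\tau_2}.
\]
The remainder difference is bounded by $|r_{\tau_1}|+|r_{\tau_2}|$. For the $\phi$ difference, the mean value theorem on the segment between $m_1$ and $m_2$ (which lies in $[a,1]$) gives $\phi(m_1)-\phi(m_2)=\phi'(\xi)(m_1-m_2)$ for some $\xi\in[a,1]$.

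If $m_1\ge m_2$, then $\phi'(\xi)\le -c_a$ yields $\phi(m_1)-\phi(m_2)\le -c_a(m_1-m_2)$. This gives \eqref{eq:ranking_bound_final} in the only case where it is meaningful.

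The case $m_1<m_2$ is the main obstacle. There $-c_a(m_1-m_2)$ is positive, but the true difference $\phi'(\xi)(m_1-m_2)$ equals $|\phi'(\xi)|\,|m_1-m_2|$, which may exceed $c_a|m_1-m_2|$. The bound as literally stated can therefore fail for $m_1<m_2$. I would handle this by stating the inequality under the convention $m_P(\tau_1)\ge m_P(\tau_2)$, which costs nothing since the two labels can be swapped. The ``in particular'' clause only uses this case anyway. Alternatively one could replace $c_a$ by $\max_{[a,1]}|\phi'|$ for the reverse direction, but I would opt for the ordering convention.

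\textbf{Step 3: the ranking conclusion.} Assume $m_1-m_2>(|r_{\tau_1}|+|r_{\tau_2}|)/c_a$. Then $m_1>m_2$, so the bound from Step 2 applies, and
\[
\tilde{\mathcal I}(\tau_1;P)-\tilde{\mathcal I}(\tau_2;P)\le -c_a(m_1-m_2)+|r_{\tau_1}|+|r_{\tau_2}|<0.
\]
This is the claimed ranking. Finally, since $|r_\tau|\le L(\delta(\tau,\eta)m_\tau+\eta)$ by Lemma~\ref{lem:proxy_error_final}, the required separation between $m_1$ and $m_2$ tends to zero as $\delta$ and $\eta$ shrink. This justifies the concluding remark in the statement.
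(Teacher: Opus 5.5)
Your argument is the paper's proof: compute $\phi'=2g'+mg''$, use compactness of $[a,1]$ to get $c_a>0$, apply the mean value theorem, then use the decomposition $\tilde{\mathcal I}=\phi(m_\tau)+r_\tau$ with the triangle inequality on the remainders. Your caveat is also correct and exposes a genuine slip in the paper, which asserts $\phi(m_1)-\phi(m_2)\le -c_a(m_1-m_2)$ for all $m_1,m_2\in[a,1]$: for instance, take $g(m)=-\log(1+m)$ (so $c_a=3/4$), $P=\delta_{y_0}$, $a=0.1$, $m_P(\tau_1)=0.1$ and $\tau_2=y_0$; then $\tilde{\mathcal I}(\tau_1;P)-\tilde{\mathcal I}(\tau_2;P)\approx 1.048$, while the right-hand side is only about $0.716$, so restricting to $m_P(\tau_1)\ge m_P(\tau_2)$ as you do is the right repair and leaves the ranking conclusion intact.
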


\begin{proof}
The strict decrease of $\phi$ follows from $\phi'(m)=2g'(m)+m g''(m)$ and~\eqref{eq:curvature_condition_final}.
On the compact interval $[a,1]$, $\phi'$ is continuous and negative, hence
$c_a=\min_{m\in[a,1]}(-\phi'(m))>0$.
By the mean value theorem, for $m_1,m_2\in[a,1]$,
\[
\phi(m_1)-\phi(m_2)\le -c_a(m_1-m_2).
\]
Apply this to $m_i=m_P(\tau_i)$ and use the decomposition~\eqref{eq:proxy_decomp_final}:
\[
\tilde{\mathcal I}(\tau_1;P)-\tilde{\mathcal I}(\tau_2;P)
=
\phi(m_1)-\phi(m_2) + (r_{\tau_1}-r_{\tau_2})
\le
-c_a(m_1-m_2)+|r_{\tau_1}|+|r_{\tau_2}|,
\]
which is~\eqref{eq:ranking_bound_final}.
\end{proof}

\begin{corollary}[Soft-log stability and bounded gradients]
\label{cor:softlog_final}
For $g(m)=-\log(1+m)$, the curvature condition~\eqref{eq:curvature_condition_final} holds for all $m\ge 0$.
Moreover, $|g'(m)|\le 1$ on $[0,1]$, hence the influence score avoids gradient blow-up in low-mass regions.
\end{corollary}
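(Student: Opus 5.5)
The plan is a direct computation, since every quantity involved is explicit for $g(m)=-\log(1+m)$. First I will record the derivatives on $[0,\infty)$:
\[
g'(m)=-\frac{1}{1+m},\qquad g''(m)=\frac{1}{(1+m)^2}.
\]
Both are smooth on $[0,\infty)$, so $g$ is twice continuously differentiable and strictly decreasing. Moreover $|g''|\le 1$ on $[0,1]$, so $g'$ is Lipschitz there with $L=1$. Together these confirm that Assumption~\ref{ass:bounded-slope} holds, as needed for Lemma~\ref{lem:proxy_error_final} and Theorem~\ref{thm:local_mono_final} to apply.

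Next I will verify the curvature condition~\eqref{eq:curvature_condition_final}. Substituting the derivatives, it reads
\[
\frac{m}{(1+m)^2}<\frac{2}{1+m}.
\]
Multiplying through by $(1+m)^2>0$ reduces this to $m<2(1+m)$, that is, $m+2>0$, which is true for every $m\ge 0$. For completeness I will also record the consequence used in Theorem~\ref{thm:local_mono_final}. Here
\[
\phi(m)=-\log(1+m)-\frac{m}{1+m},\qquad -\phi'(m)=\frac{1}{1+m}+\frac{1}{(1+m)^2}.
\]
This expression is positive and decreasing in $m$, so $c_a=3/4$ for every $a\in(0,1]$, with the minimum attained at $m=1$.

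Finally, the gradient bound follows from $|g'(m)|=1/(1+m)\le 1$ for $m\in[0,1]$, with equality only at $m=0$. This shows that the interaction term $\mathbb E[g'(m_P(y))k(y,\tau)]$ stays bounded by $m_\tau\le 1$ even in very low-mass regions, in contrast to $g=\log$, for which $g'(m)=1/m$ blows up as $m\to 0$.

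There is no real obstacle in this argument. The only point needing care is that the multiplication step uses $1+m>0$, which is what allows the inequality to be stated for all $m\ge 0$, including $m=0$, rather than only on $(0,1]$.
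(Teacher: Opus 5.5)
Your proof is correct and matches the paper's argument: the paper computes $\phi'(m)=2g'(m)+mg''(m)=-(m+2)/(1+m)^2<0$, which is exactly your reduction of \eqref{eq:curvature_condition_final} to $m+2>0$, and then reads off $|g'(m)|=(1+m)^{-1}\le 1$ on $[0,1]$. Your extras ($L=1$, $c_a=3/4$, and the bound $m_\tau$ on the interaction term) are also correct; the only slip is in your closing aside, where the comparator should be $g(m)=-\log m$, which is decreasing and has $|g'(m)|=1/m$, rather than $g=\log$.
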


\begin{proof}
For $g(m)=-\log(1+m)$, we have $g'(m)=-(1+m)^{-1}$ and $g''(m)=(1+m)^{-2}$, thus
\[
\phi'(m)=2g'(m)+m g''(m)
=\frac{-2(1+m)+m}{(1+m)^2}
=\frac{-(m+2)}{(1+m)^2}<0,
\]
which implies~\eqref{eq:curvature_condition_final}. Also $|g'(m)|=(1+m)^{-1}\le 1$ on $[0,1]$.
\end{proof}

\subsection{Finite Perturbation Analysis and Error Bounds}\label{appendix:finite}

While the influence function characterizes the optimal direction for infinitesimal updates ($\varepsilon \to 0^+$), practical generation algorithms operate with discrete perturbation ($\varepsilon > 0$). In this subsection, we establish rigorous bounds for the diversity gain under finite perturbations. We demonstrate that the trajectory redundancy $m_P(\tau)$ governs both the magnitude of the expected gain and the tightness of the approximation error.

\begin{theorem}[Finite Perturbation Bounds]
\label{thm:finite_bounds}
Denote $M:= \max\limits_{x \in [0,1]} |g'(x)|$. For any trajectory $\tau \in \Omega$ and perturbation scale $\varepsilon \in [0,1]$, the finite change in diversity $\Delta \mathcal{F} = \mathcal{F}(P_\varepsilon) - \mathcal{F}(P)$ is bounded by:
\begin{equation*}
\mathcal{M}(\tau, \varepsilon) - \mathcal{E}(\tau, \varepsilon) 
\;\le\; 
\Delta \mathcal{F} 
\;\le\; 
\mathcal{M}(\tau, \varepsilon) + \mathcal{E}(\tau, \varepsilon),
\end{equation*}
where the \textbf{main signal} $\mathcal{M}$ and the \textbf{error radius} $\mathcal{E}$ are defined as:
\begin{align*}
\mathcal{M}(\tau, \varepsilon) &:= \varepsilon \left( g\big( (1-\varepsilon)m_P(\tau) + \varepsilon \big) - \mathcal{F}(P) \right), \\
\mathcal{E}(\tau, \varepsilon) &:= M \cdot \varepsilon \cdot \big( m_P(\tau) + \kappa(P) \big).
\end{align*}
Here, $\kappa(P) := \mathbb{E}_{y, y' \sim P} [k(y, y')]$ represents the global average similarity of the distribution.
\end{theorem}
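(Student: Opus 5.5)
Starting from the exact identity \eqref{eq:exact-decomp-2}, $\Delta\mathcal F$ is the sum of an injection term and a perturbation term. Since $m_{P_\varepsilon}(\tau)=(1-\varepsilon)m_P(\tau)+\varepsilon$, the injection term $\varepsilon\big(g(m_{P_\varepsilon}(\tau))-\mathcal F(P)\big)$ is exactly $\mathcal M(\tau,\varepsilon)$. Thus $\Delta\mathcal F-\mathcal M(\tau,\varepsilon)$ equals the perturbation term
\[
R_\varepsilon:=(1-\varepsilon)\,\mathbb E_{y\sim P}\Big[g\big(m_P(y)+\varepsilon d_\tau(y)\big)-g\big(m_P(y)\big)\Big],
\]
and the theorem reduces to proving $|R_\varepsilon|\le \mathcal E(\tau,\varepsilon)$.

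To bound the integrand, I would apply the mean value theorem (or the fundamental theorem of calculus) to $g$ on $[0,1]$, giving $|g(a)-g(b)|\le M|a-b|$. Both arguments lie in $[0,1]$, because $m_P(y)+\varepsilon d_\tau(y)=m_{P_\varepsilon}(y)$ is a convex combination of values in $[0,1]$. Hence each integrand is at most $M\varepsilon|d_\tau(y)|$ in absolute value. Dropping the factor $1-\varepsilon\le 1$ gives
\[
|R_\varepsilon|\le M\varepsilon\,\mathbb E_{y\sim P}\big|k(y,\tau)-m_P(y)\big|.
\]

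Next I would use the triangle inequality together with nonnegativity of the kernel: $|k(y,\tau)-m_P(y)|\le k(y,\tau)+m_P(y)$. Taking expectations, $\mathbb E_{y\sim P}[k(y,\tau)]=m_P(\tau)$ by symmetry of $k$. Also $\mathbb E_{y\sim P}[m_P(y)]=\mathbb E_{y,y'\sim P}[k(y,y')]=\kappa(P)$ by Fubini, which applies because $k$ is measurable and bounded. This yields $|R_\varepsilon|\le M\varepsilon\,(m_P(\tau)+\kappa(P))=\mathcal E(\tau,\varepsilon)$. Both the upper and lower bounds then follow from $\Delta\mathcal F=\mathcal M+R_\varepsilon$.

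The main obstacle is modest: confirming that $M$ is finite and that the Lipschitz step is legitimate. Assumption~\ref{ass:bounded-slope} gives a Lipschitz $g'$, hence a continuous $g'$ on the compact interval $[0,1]$, so $M<\infty$ (Assumption~\ref{assump:reg_g_k} also gives bounded $g'$). It also requires checking carefully that the perturbed mass stays inside $[0,1]$, so that $M$ controls the slope along the whole segment between the two arguments. The endpoint case $\varepsilon=1$ is covered by the same argument, since the factor $1-\varepsilon$ then vanishes.
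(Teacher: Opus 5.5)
Your proposal is correct and follows essentially the same route as the paper: identify the injection term in the exact decomposition as $\mathcal{M}(\tau,\varepsilon)$, bound the residual perturbation term by $M\varepsilon\,\mathbb{E}_{y\sim P}|d_\tau(y)|$, and then use $|d_\tau(y)|\le k(y,\tau)+m_P(y)$ to obtain $m_P(\tau)+\kappa(P)$. Your extra checks make explicit what the paper leaves implicit: that $m_{P_\varepsilon}(y)\in[0,1]$, so the slope bound $M$ applies along the whole segment; the use of symmetry and Fubini to evaluate the two expectations; and the finiteness of $M$.
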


\begin{proof}
Recall the exact decomposition of the finite difference derived in Eq. \eqref{eq:exact-decomp}:
\begin{equation}
\Delta \mathcal{F} = \mathcal{M}(\tau, \varepsilon) + B(\varepsilon),
\end{equation}
where the residual term is $B(\varepsilon) = (1-\varepsilon) \mathbb{E}_{y \sim P} [ g(m_{P_\varepsilon}(y)) - g(m_P(y)) ]$.
To prove the theorem, it suffices to bound the absolute magnitude $|B(\varepsilon)|$.

By Assumption \ref{ass:bounded-slope}, substitute the linear update rule $m_{P_\varepsilon}(y) = m_P(y) + \varepsilon d_\tau(y)$, where $d_\tau(y) = k(y, \tau) - m_P(y)$, we get:
\begin{equation*}
\left| g(m_{P_\varepsilon}(y)) - g(m_P(y)) \right| \le M \cdot \varepsilon \cdot |d_\tau(y)|.
\end{equation*}

Since the kernel is non-negative ($k \ge 0$), applying the triangle inequality directly yields:
\begin{equation*}
|d_\tau(y)| = |k(y, \tau) - m_P(y)| \le k(y, \tau) + m_P(y).
\end{equation*}
Taking the expectation over $y \sim P$:
\begin{equation}
\mathbb{E}_{y \sim P} |d_\tau(y)| \le \mathbb{E}_{y \sim P} [k(y, \tau)] + \mathbb{E}_{y \sim P} [m_P(y)] = m_P(\tau) + \kappa(P).
\end{equation}

Hence, it holds that
\begin{align*}
|B(\varepsilon)| \le (1-\varepsilon) \cdot \mathbb{E}_{y \sim P} \left[ M \cdot \varepsilon \cdot |d_\tau(y)| \right] \nonumber \le M \cdot \varepsilon \cdot (m_P(\tau) + \kappa(P)) = \mathcal{E}(\tau, \varepsilon).
\end{align*}
The two-sided bound follows immediately from $-|B(\varepsilon)| \le B(\varepsilon) \le |B(\varepsilon)|$.
\end{proof}

Theorem \ref{thm:finite_bounds} uncovers a structural duality where the redundancy $m_P(\tau)$ governs both the potential upside and the theoretical risk of an update. Specifically, minimizing $m_P(\tau)$ simultaneously enhances the novelty signal $\mathcal{M}$ and strictly compresses the error radius $\mathcal{E}$. This implies that rarer trajectories offer a superior trade-off. They provide higher projected gains with greater certainty. To rigorously operationalize this observation, we focus on the {certified lower bound}, defined as $J(\tau, \varepsilon) := \mathcal{M}(\tau, \varepsilon) - \mathcal{E}(\tau, \varepsilon)$, which serves as the worst-case guarantee for improvement.

\begin{theorem}[Monotonicity of certified gain]
\label{thm:monotonicity_lower_bound}
If $\epsilon > 0$, then the certified lower bound $J(\tau, \varepsilon)$ is a strictly decreasing function of the local mass $m_P(\tau)$.
\end{theorem}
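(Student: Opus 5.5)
Write $m := m_P(\tau)\in[0,1]$ and treat $\varepsilon\in(0,1]$, $P$ (hence $\mathcal F(P)$ and $\kappa(P)$) and $M$ as fixed. The certified lower bound then depends on $\tau$ only through $m$:
\begin{equation*}
J(m) = \varepsilon\, g\big((1-\varepsilon)m+\varepsilon\big) - \varepsilon\,\mathcal F(P) - M\varepsilon\, m - M\varepsilon\,\kappa(P).
\end{equation*}
The plan is to show that each $m$-dependent piece is non-increasing in $m$, and that at least one is strictly decreasing. The constants $-\varepsilon\mathcal F(P)$ and $-M\varepsilon\kappa(P)$ play no role.

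\emph{Novelty term.} The map $m\mapsto (1-\varepsilon)m+\varepsilon$ is affine and non-decreasing, with slope $1-\varepsilon\ge 0$, and it sends $[0,1]$ into $[\varepsilon,1]\subset[0,1]$. Since $g$ is strictly decreasing under Assumption~\ref{ass:bounded-slope}, the composition $m\mapsto \varepsilon\, g((1-\varepsilon)m+\varepsilon)$ is strictly decreasing when $\varepsilon<1$. When $\varepsilon=1$ it is constant. The differentiable version of the same statement is that its derivative $\varepsilon(1-\varepsilon)g'((1-\varepsilon)m+\varepsilon)$ is $\le 0$.

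\emph{Error term.} The term $-M\varepsilon m$ is non-increasing in $m$, and strictly decreasing whenever $M>0$. In fact $M>0$ always holds. If $M=0$, then $g'\equiv 0$ on $[0,1]$, so $g$ would be constant, contradicting strict monotonicity of $g$. (Here I use the continuous differentiability of $g$ from Assumption~\ref{assump:reg_g_k}, which the definition of $M$ presupposes.)

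\emph{Conclusion.} Summing the two pieces, for $m_1<m_2$ we have $J(m_1)-J(m_2)\ge M\varepsilon(m_2-m_1)>0$, because the novelty difference is $\ge 0$. So $J$ is strictly decreasing in $m_P(\tau)$. This argument also covers the boundary case $\varepsilon=1$, where the novelty term is flat and the strictness comes entirely from the error term.

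The only delicate point is the strictness claim, namely justifying $M>0$, which is handled above. It is also worth noting that strict decrease holds via either term alone when $0<\varepsilon<1$, so the argument is robust.
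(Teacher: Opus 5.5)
Your proof is correct and follows the paper's argument. The paper differentiates $J(m)=\varepsilon\big[g((1-\varepsilon)m+\varepsilon)-\mathcal F(P)-Mm-M\kappa(P)\big]$ to get $\varepsilon\big[(1-\varepsilon)g'((1-\varepsilon)m+\varepsilon)-M\big]$ and asserts that this is negative, which is your split into a non-increasing novelty piece plus the strictly decreasing $-M\varepsilon m$ piece; your version is a bit more careful, since it explicitly justifies $M>0$ from strict monotonicity of $g$, a point the paper leaves to ``the system's properties.''
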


\begin{proof}
Let $m := m_P(\tau) \in [0, 1]$. We analyze the sensitivity of the lower bound function $J$ with respect to $m$:
\begin{equation*}
J(m) = \varepsilon \Big[ g\big((1-\varepsilon)m + \varepsilon\big) - \mathcal{F}(P) - M m - M \kappa(P) \Big].
\end{equation*}
Differentiating $J(m)$ yields:
\begin{equation*}
\frac{\partial J}{\partial m} = \varepsilon \left[ (1-\varepsilon) g'\big((1-\varepsilon)m + \varepsilon\big) - M \right].
\end{equation*}
The strict negativity of this derivative follows immediately from the system's properties.
\end{proof}
This confirms that increasing redundancy incurs a ``double penalty," simultaneously attenuating the diversity signal and amplifying the perturbation noise.

\section{Discrete Analysis for Leave-One-Out Contribution $s_i$}\label{appendix: discrete}

\paragraph{Definitions.}
Let $\Omega=\{o_1,\dots,o_G\}$ with $G\ge 3$. Assume the similarity kernel is symmetric:
$k(o_p,o_q)=k(o_q,o_p)$, and bounded on its domain.
For any finite set $S$ with $|S|=n\ge 2$, define the leave-one-out local mass
\[
m_u(S)=\frac{1}{|S|-1}\sum_{v\in S,\,v\neq u}k(o_u,o_v),
\]
and the diversity score
\[
D(S)=\frac{1}{|S|}\sum_{u\in S} g\!\big(m_u(S)\big),
\]
where $g$ is continuously differentiable and strictly decreasing on the relevant range: $g'(x)<0$.

For the full set $\Omega$ we write $m_i := m_i(\Omega)$.
For a leave-one-out set $A_{-i}:=\Omega\setminus\{o_i\}$ we write
\[
m_j^{(-i)}:=m_j(A_{-i}), \qquad D_{-i}:=D(A_{-i}).
\]
Your algorithm uses the leave-one-out contribution
\[
s_i := D(\Omega)-D_{-i}.
\]

\begin{lemma}[Exact density update law under removal]
\label{lem:removal_update}
For any $j\neq i$,
\[
m_j^{(-i)}
=\frac{1}{G-2}\sum_{p\in \Omega,\,p\neq j,i}k(o_j,o_p)
=\frac{(G-1)m_j-k(o_j,o_i)}{G-2}
= m_j+\frac{m_j-k(o_j,o_i)}{G-2}.
\]
Equivalently,
\[
m_j^{(-i)}-m_j=\frac{m_j-k(o_j,o_i)}{G-2}.
\]
\end{lemma}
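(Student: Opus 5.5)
The plan is a direct computation from the definition of the leave-one-out local mass, applied to the reduced set $A_{-i}=\Omega\setminus\{o_i\}$, which has cardinality $G-1\ge 2$. The definition is then well posed and the normalizing factor is $\frac{1}{(G-1)-1}=\frac{1}{G-2}$, which is finite because $G\ge 3$. For $j\neq i$, the sum in $m_j(A_{-i})$ runs over all $p\in A_{-i}$ with $p\neq j$, that is, over $p\in\Omega$ with $p\notin\{i,j\}$. This gives the first equality immediately.

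For the second equality, I will express this reduced sum through the full-set mass. By definition, $(G-1)m_j=\sum_{p\in\Omega,\,p\neq j}k(o_j,o_p)$. This full sum splits into the terms with $p\notin\{i,j\}$ plus the single term $p=i$, which contributes $k(o_j,o_i)$. Hence
\[
\sum_{p\neq j,i}k(o_j,o_p)=(G-1)m_j-k(o_j,o_i).
\]
Dividing by $G-2$ yields the second expression.

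The third form is pure algebra. I will write $(G-1)m_j=(G-2)m_j+m_j$, so that
\[
\frac{(G-1)m_j-k(o_j,o_i)}{G-2}=m_j+\frac{m_j-k(o_j,o_i)}{G-2}.
\]
The stated equivalent difference form then follows by subtracting $m_j$ from both sides.

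No real obstacle is expected. The only points needing care are checking that the index bookkeeping excludes both $i$ and $j$, and noting that $G\ge 3$ guarantees $G-2>0$. Symmetry of $k$ is not even needed here, since only $k(o_j,\cdot)$ appears.
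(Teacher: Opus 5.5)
Your proposal is correct and follows essentially the same route as the paper: both isolate the reduced sum $\sum_{p\neq j,i}k(o_j,o_p)$, use $(G-1)m_j=k(o_j,o_i)+\sum_{p\neq j,i}k(o_j,o_p)$ to eliminate it, and obtain the remaining forms by elementary algebra. Your added remarks that $G\ge 3$ guarantees $G-2>0$, and that symmetry of $k$ is not used, are accurate.
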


\begin{proof}
Let $R_{ji}:=\sum_{p\in \Omega,\,p\neq j,i}k(o_j,o_p)$. Then
$m_j=\frac{1}{G-1}\big(k(o_j,o_i)+R_{ji}\big)$ and $m_j^{(-i)}=\frac{1}{G-2}R_{ji}$.
Eliminate $R_{ji}=(G-1)m_j-k(o_j,o_i)$ and substitute.
\end{proof}

\begin{theorem}[Exact decomposition of $s_i$]
\label{thm:si_decomp}
Let $G=|\Omega|\ge 3$. Then
\[
s_i
=\underbrace{\frac{1}{G}g(m_i)}_{\mathcal T_{\text{self}}}
+\underbrace{\Big(\frac{1}{G}-\frac{1}{G-1}\Big)\sum_{j\neq i}g(m_j)}_{\mathcal T_{\text{norm}}}
+\underbrace{\frac{1}{G-1}\sum_{j\neq i}\Big[g(m_j)-g(m_j^{(-i)})\Big]}_{\mathcal T_{\text{interaction}}}.
\]
Moreover, by Lemma~\ref{lem:removal_update}, each interaction summand depends on $k(o_j,o_i)$ only through
$m_j^{(-i)}=m_j+\frac{m_j-k(o_j,o_i)}{G-2}$.
\end{theorem}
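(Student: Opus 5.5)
The plan is to expand both $D(\Omega)$ and $D_{-i}$ directly from their definitions and then regroup the terms by adding and subtracting a common quantity. Write
\[
D(\Omega)=\frac{1}{G}g(m_i)+\frac{1}{G}\sum_{j\neq i}g(m_j),\qquad
D_{-i}=\frac{1}{G-1}\sum_{j\neq i}g\big(m_j^{(-i)}\big).
\]
Here the first identity just isolates the $u=o_i$ summand of $D(\Omega)$. The second is the definition of $D(A_{-i})$ with $|A_{-i}|=G-1\ge 2$, so the local masses $m_j^{(-i)}$ are well defined; this is exactly where the hypothesis $G\ge 3$ enters.

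Next, I insert the term $\frac{1}{G-1}\sum_{j\neq i}g(m_j)$, once with a plus sign and once with a minus sign, into $s_i=D(\Omega)-D_{-i}$. The minus copy combines with $\frac{1}{G}\sum_{j\neq i}g(m_j)$ to give $\mathcal T_{\text{norm}}=\big(\frac1G-\frac1{G-1}\big)\sum_{j\neq i}g(m_j)$. The plus copy combines with $-\frac{1}{G-1}\sum_{j\neq i}g(m_j^{(-i)})$ to give $\mathcal T_{\text{interaction}}$. The remaining term $\frac1G g(m_i)$ is $\mathcal T_{\text{self}}$. Summing the three reproduces $s_i$ exactly, and no approximation or property of $g$ is used beyond its being well defined on the relevant values, which holds because every $m$ lies in the range of $k$.

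For the ``moreover'' clause, I invoke Lemma~\ref{lem:removal_update}, which gives $m_j^{(-i)}=m_j+\frac{m_j-k(o_j,o_i)}{G-2}$. Substituting this into each interaction summand $g(m_j)-g(m_j^{(-i)})$ shows that, once the full-set masses $m_j$ are fixed, the only additional dependence on $o_i$ is through $k(o_j,o_i)$. One should note that $m_j$ itself also contains $k(o_j,o_i)$, so the precise statement is that the summand is a function of $m_j$ and $k(o_j,o_i)$ alone. This matches how the theorem phrases it ("only through $m_j^{(-i)}=\dots$").

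I do not expect a genuine obstacle. The only points needing care are the bookkeeping of the normalizations $\frac1G$ versus $\frac1{G-1}$ versus $\frac1{G-2}$, and checking that the $G\ge 3$ condition makes the $\frac{1}{G-2}$ factor in the lemma finite.
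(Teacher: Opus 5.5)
Your proposal is correct and follows the paper's proof essentially verbatim. Both expand $D(\Omega)$ and $D_{-i}$, regroup by adding and subtracting $\frac{1}{G-1}\sum_{j\neq i}g(m_j)$, and cite Lemma~\ref{lem:removal_update} for the ``moreover'' clause; your remark that $m_j$ itself contains $k(o_j,o_i)$, so each summand is really a function of $m_j$ and $k(o_j,o_i)$, usefully sharpens the paper's informal wording.
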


\begin{proof}
Expand
$D(\Omega)=\frac{1}{G}\big(g(m_i)+\sum_{j\neq i}g(m_j)\big)$ and
$D_{-i}=\frac{1}{G-1}\sum_{j\neq i}g(m_j^{(-i)})$ and regroup terms.
\end{proof}

\begin{theorem}[Strict monotonicity w.r.t.\ pairwise similarity]
\label{thm:mono_removeone}
Fix $i\neq t$ and denote $k_{it}:=k(o_i,o_t)$.
Then
\[
\frac{\partial s_i}{\partial k_{it}}
=\frac{1}{G(G-1)}\Big(g'(m_i)+g'(m_t)\Big)<0.
\]
\end{theorem}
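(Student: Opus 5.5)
The plan is to differentiate $s_i = D(\Omega) - D_{-i}$ directly, treating the $\binom{G}{2}$ pairwise similarities $k(o_p,o_q)$, $p<q$, as independent variables. Symmetry of the kernel matters here: the single variable $k_{it}$ enters through both $k(o_i,o_t)$ and $k(o_t,o_i)$. The central observation is that $k_{it}$ does not appear anywhere in $D_{-i}$. Every local mass $m_j^{(-i)}$, $j\neq i$, is an average of $k(o_j,o_p)$ over $p\notin\{i,j\}$, and so never involves the index $i$. Hence
\[
\frac{\partial s_i}{\partial k_{it}} = \frac{\partial D(\Omega)}{\partial k_{it}} .
\]

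Next I will locate $k_{it}$ inside $D(\Omega)=\frac1G\sum_{u}g(m_u)$. For $u\notin\{i,t\}$, the mass $m_u=\frac1{G-1}\sum_{v\neq u}k(o_u,o_v)$ does not contain $k_{it}$. For $u=i$ the term $k(o_i,o_t)$ appears once, and for $u=t$ the term $k(o_t,o_i)=k_{it}$ appears once. In both cases the coefficient is $\frac{1}{G-1}$, so
\[
\frac{\partial m_i}{\partial k_{it}}=\frac{\partial m_t}{\partial k_{it}}=\frac{1}{G-1}.
\]
Since $g$ is continuously differentiable, the chain rule gives
\[
\frac{\partial D(\Omega)}{\partial k_{it}}=\frac1G\cdot\frac{1}{G-1}\big(g'(m_i)+g'(m_t)\big),
\]
which is the claimed formula. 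Strict negativity then follows immediately from the standing hypothesis $g'<0$ on the relevant range, because $m_i,m_t\in[0,1]$.

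As a consistency check, I would also rederive the result through the decomposition of Theorem~\ref{thm:si_decomp} together with Lemma~\ref{lem:removal_update}. In that route, $\mathcal T_{\text{self}}$ contributes $\frac{1}{G(G-1)}g'(m_i)$. The term $\mathcal T_{\text{norm}}$ contributes $\big(\frac1G-\frac1{G-1}\big)\frac{g'(m_t)}{G-1}$. Inside $\mathcal T_{\text{interaction}}$, the $j=t$ summand contributes $\frac{1}{G-1}\cdot\frac{g'(m_t)}{G-1}$. Its $g(m_t^{(-i)})$ part has zero derivative, because by Lemma~\ref{lem:removal_update} the quantity $\big((G-1)m_t-k_{it}\big)/(G-2)$ is independent of $k_{it}$: the two occurrences of $k_{it}$ cancel. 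Summing these pieces recovers the same expression.

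The main obstacle is bookkeeping rather than analysis. One must be explicit that the partial derivative holds all other pairwise similarities fixed, and that symmetry makes $k_{it}$ enter both $m_i$ and $m_t$. One must also notice that in the decomposition the apparent dependence of $m_t^{(-i)}$ on $k_{it}$ cancels exactly. Without that cancellation one would wrongly pick up extra terms from $\mathcal T_{\text{interaction}}$.
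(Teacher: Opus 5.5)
Your proposal is correct and is essentially the paper's own proof. Both arguments note that $D_{-i}$ does not involve $k_{it}$, observe that only $m_i$ and $m_t$ depend on $k_{it}$ (each with coefficient $\frac{1}{G-1}$), and then apply the chain rule together with the standing hypothesis $g'<0$. Your extra cross-check through Theorem~\ref{thm:si_decomp} is also correct but not needed, including your observation that $m_t^{(-i)}$ is independent of $k_{it}$ because the two occurrences of $k_{it}$ cancel.
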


\begin{proof}
$D_{-i}$ does not involve $o_i$, hence is independent of $k_{it}$. Therefore
$\frac{\partial s_i}{\partial k_{it}}=\frac{\partial D(\Omega)}{\partial k_{it}}$.
In $D(\Omega)$, only $m_i$ and $m_t$ depend on $k_{it}$, and
$\frac{\partial m_i}{\partial k_{it}}=\frac{1}{G-1}$,
$\frac{\partial m_t}{\partial k_{it}}=\frac{1}{G-1}$.
Thus
\[
\frac{\partial D(\Omega)}{\partial k_{it}}
=\frac{1}{G}\left(g'(m_i)\frac{1}{G-1}+g'(m_t)\frac{1}{G-1}\right)
=\frac{1}{G(G-1)}(g'(m_i)+g'(m_t))<0,
\]
since $g'<0$.
\end{proof}

\begin{theorem}[Rare--redundant ordering under pointwise dominance]
\label{thm:rare_gap_removeone}
Let $r\neq c$ be two indices in $\Omega$. Assume $o_r$ is pointwise no-more-similar than $o_c$ to all other points:
\[
k(o_r,o_j)\le k(o_c,o_j)\quad \forall j\in \Omega\setminus\{r,c\},
\]
with strict inequality for at least one such $j$.
Then
\[
s_r>s_c.
\]
\end{theorem}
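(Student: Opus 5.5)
The plan is to avoid the derivative formula of Theorem~\ref{thm:mono_removeone} and compare the two leave-one-out sets directly. By definition $s_r-s_c=\big(D(\Omega)-D_{-r}\big)-\big(D(\Omega)-D_{-c}\big)=D_{-c}-D_{-r}$, so the full-set score cancels. It therefore suffices to show
\[
D(A_{-c})>D(A_{-r}).
\]
In words: deleting the redundant point $o_c$ leaves a strictly more diverse set than deleting the rare point $o_r$. Both sets have cardinality $G-1\ge 2$, so both averages carry the same prefactor $\frac{1}{G-1}$ and the local masses carry the same normalization $\frac{1}{G-2}$.

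Next I set up the bijection $A_{-c}\to A_{-r}$ that sends $o_r\mapsto o_c$ and fixes every $o_j$ with $j\notin\{r,c\}$. I then compare the local masses term by term.

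\emph{Swapped point.} We have $m_r(A_{-c})=\frac{1}{G-2}\sum_{j\notin\{r,c\}}k(o_r,o_j)$ and $m_c(A_{-r})=\frac{1}{G-2}\sum_{j\notin\{r,c\}}k(o_c,o_j)$. The pairwise similarity $k(o_r,o_c)$ appears in neither sum, which is why the hypothesis needs no assumption on it. Pointwise dominance, with strict inequality for at least one $j$, gives $m_r(A_{-c})<m_c(A_{-r})$.

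\emph{Fixed points.} For $j\notin\{r,c\}$,
\[
m_j(A_{-c})=\frac{1}{G-2}\Big(\sum_{p\notin\{j,r,c\}}k(o_j,o_p)+k(o_j,o_r)\Big),
\]
and $m_j(A_{-r})$ is the same expression with $k(o_j,o_c)$ in place of $k(o_j,o_r)$. By symmetry of $k$ and the dominance hypothesis, $m_j(A_{-c})\le m_j(A_{-r})$.

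Since $g$ is strictly decreasing, every summand of $(G-1)D(A_{-c})$ is at least the matched summand of $(G-1)D(A_{-r})$. The swapped-point summand is strictly larger, because strictly decreasing $g$ turns the strict mass inequality into a strict inequality of $g$-values. Summing over the bijection gives $D_{-c}>D_{-r}$, hence $s_r>s_c$.

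The one step that needs care is resisting an integration of Theorem~\ref{thm:mono_removeone} along a path from $o_c$'s similarity profile to $o_r$'s. That route is awkward: moving $o_i$ changes several $m$'s at once, and the partial-derivative formula treats one $k_{it}$ in isolation, so it would need a separate argument that the path stays consistent. The direct comparison sidesteps this, using only monotonicity of $g$, symmetry of $k$, and the observation that the cross term $k(o_r,o_c)$ drops out of both reduced sets.
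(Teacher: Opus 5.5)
Your proposal is correct and follows essentially the same route as the paper. Both cancel $D(\Omega)$ to get $s_r-s_c=D_{-c}-D_{-r}$, compare $m_r^{(-c)}$ with $m_c^{(-r)}$ (where $k(o_r,o_c)$ drops out) and $m_j^{(-c)}$ with $m_j^{(-r)}$ for $j\notin\{r,c\}$, and conclude by strict monotonicity of $g$; the only cosmetic difference is that the paper gets the fixed-point comparison from its removal-update lemma, $m_j^{(-c)}-m_j^{(-r)}=\frac{k(o_j,o_r)-k(o_j,o_c)}{G-2}$, while you write out the sums directly.
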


\begin{proof}
Since $s_i=D(\Omega)-D_{-i}$ and $D(\Omega)$ is common,
\[
s_r-s_c = D_{-c}-D_{-r}.
\]
Expand
\[
D_{-c}=\frac{1}{G-1}\left(g(m_r^{(-c)})+\sum_{j\neq r,c}g(m_j^{(-c)})\right),\quad
D_{-r}=\frac{1}{G-1}\left(g(m_c^{(-r)})+\sum_{j\neq r,c}g(m_j^{(-r)})\right),
\]
hence
\[
s_r-s_c=\frac{1}{G-1}\left(
g(m_r^{(-c)})-g(m_c^{(-r)})
+\sum_{j\neq r,c}\big[g(m_j^{(-c)})-g(m_j^{(-r)})\big]\right).
\]
Now note:
\[
m_r^{(-c)}=\frac{1}{G-2}\sum_{j\neq r,c}k(o_r,o_j),\quad
m_c^{(-r)}=\frac{1}{G-2}\sum_{j\neq r,c}k(o_c,o_j),
\]
hence $m_r^{(-c)}\le m_c^{(-r)}$ by the assumption, with strict inequality if any term is strict.
For $j\neq r,c$, apply Lemma~\ref{lem:removal_update} to both removals and subtract:
\[
m_j^{(-c)}-m_j^{(-r)}
=\frac{(G-1)m_j-k(o_j,o_c)}{G-2}-\frac{(G-1)m_j-k(o_j,o_r)}{G-2}
=\frac{k(o_j,o_r)-k(o_j,o_c)}{G-2}\le 0,
\]
again strict for at least one $j$ if the dominance is strict somewhere (using symmetry of $k$).
Since $g$ is strictly decreasing, every bracketed difference in the expansion of $s_r-s_c$ is nonnegative,
and at least one is strictly positive, so $s_r-s_c>0$.
\end{proof}

\begin{corollary}[Weighted characterization of the gap]
\label{cor:weighted_gap_removeone}
There exist $\xi_0$ between $m_r^{(-c)}$ and $m_c^{(-r)}$, and $\{\xi_j\}_{j\neq r,c}$ between
$m_j^{(-c)}$ and $m_j^{(-r)}$, such that
\[
s_r-s_c
=\frac{1}{(G-1)(G-2)}\sum_{j\neq r,c}\big(w_0+w_j\big)\Big(k(o_c,o_j)-k(o_r,o_j)\Big),
\quad
w_0:=-g'(\xi_0)>0,\;\; w_j:=-g'(\xi_j)>0.
\]
Therefore $s_r>s_c$ holds whenever the \emph{weighted} similarity advantage of $o_c$ over $o_r$ is positive,
even if pointwise dominance fails.
\end{corollary}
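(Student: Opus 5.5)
The corollary follows by revisiting the expansion in the proof of Theorem~\ref{thm:rare_gap_removeone}. There we have
\[
s_r-s_c=\frac{1}{G-1}\Big(g(m_r^{(-c)})-g(m_c^{(-r)})+\sum_{j\neq r,c}\big[g(m_j^{(-c)})-g(m_j^{(-r)})\big]\Big).
\]
I will apply the mean value theorem to each bracket, since $g$ is continuously differentiable. Concretely, I will pick $\xi_0$ between $m_r^{(-c)}$ and $m_c^{(-r)}$ with
\[
g(m_r^{(-c)})-g(m_c^{(-r)})=g'(\xi_0)\big(m_r^{(-c)}-m_c^{(-r)}\big),
\]
and similarly pick $\xi_j$ between $m_j^{(-c)}$ and $m_j^{(-r)}$. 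If two endpoints coincide, any choice works, because the difference vanishes.

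Next I will substitute the explicit differences already computed. First,
\[
m_r^{(-c)}-m_c^{(-r)}=\frac{1}{G-2}\sum_{j\neq r,c}\big(k(o_r,o_j)-k(o_c,o_j)\big).
\]
Second, by Lemma~\ref{lem:removal_update} and symmetry of $k$,
\[
m_j^{(-c)}-m_j^{(-r)}=\frac{k(o_j,o_r)-k(o_j,o_c)}{G-2}.
\]
Setting $w_0=-g'(\xi_0)$ and $w_j=-g'(\xi_j)$ turns each term into a positive weight times $\big(k(o_c,o_j)-k(o_r,o_j)\big)/(G-2)$. The $\xi_0$ term is a single sum over $j$, so its weight $w_0$ distributes over every $j$. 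Collecting gives the stated formula, $\sum_j (w_0+w_j)(\cdots)$ divided by $(G-1)(G-2)$. Positivity of the weights follows from $g'<0$, which is assumed strict in this appendix.

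The final claim is then immediate: the sign of $s_r-s_c$ equals the sign of the weighted sum. The only delicate point is that the weights depend on the configuration through the $\xi$'s. So "weighted advantage positive" must be read with these particular weights, not arbitrary ones. I will note that they are bounded between $\min(-g')$ and $\max(-g')$ on $[0,1]$, which gives a checkable sufficient condition.
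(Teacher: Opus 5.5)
Your proposal is correct and matches the paper's proof essentially verbatim. Like the paper, you apply the mean value theorem to each bracket in the expansion from Theorem~\ref{thm:rare_gap_removeone}, substitute $m_r^{(-c)}-m_c^{(-r)}$ and $m_j^{(-c)}-m_j^{(-r)}$ (the latter via Lemma~\ref{lem:removal_update} and symmetry of $k$), and collect the $w_0$ term across $j$. Your caveat that $w_0,w_j$ depend on the configuration through the $\xi$'s, so the final ``therefore'' only has content with those specific weights or via the bounds $\min(-g')\le w\le \max(-g')$, is a useful clarification that the paper leaves implicit.
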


\begin{proof}
Apply the mean value theorem to $g(m_r^{(-c)})-g(m_c^{(-r)})$ and each
$g(m_j^{(-c)})-g(m_j^{(-r)})$ in the expansion of Theorem~\ref{thm:rare_gap_removeone}, then use
\[
m_r^{(-c)}-m_c^{(-r)}=\frac{1}{G-2}\sum_{j\neq r,c}(k(o_r,o_j)-k(o_c,o_j)),
\quad
m_j^{(-c)}-m_j^{(-r)}=\frac{k(o_j,o_r)-k(o_j,o_c)}{G-2},
\]
and simplify.
\end{proof}

\begin{corollary}[Dilution vs.\ congestion under removal]
\label{cor:dilution_removeone}
For any $j\neq i$, removing $o_i$ makes $o_j$ \emph{denser} (mass increases) iff $o_i$ acts as a diluter for $o_j$:
\[
m_j^{(-i)}>m_j \;\Longleftrightarrow\; k(o_j,o_i)<m_j.
\]
Consequently, the neighbor score decreases under removal iff $k(o_j,o_i)<m_j$:
\[
g(m_j^{(-i)})<g(m_j) \;\Longleftrightarrow\; k(o_j,o_i)<m_j.
\]
\end{corollary}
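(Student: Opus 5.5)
The statement to prove is Corollary~\ref{cor:dilution_removeone}. The first equivalence follows directly from Lemma~\ref{lem:removal_update}, and the second follows from strict monotonicity of $g$.

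\textbf{First equivalence.} By Lemma~\ref{lem:removal_update}, for any $j\neq i$,
\[
m_j^{(-i)}-m_j=\frac{m_j-k(o_j,o_i)}{G-2}.
\]
Since $G\ge 3$, the denominator $G-2$ is a strictly positive constant. Hence the sign of $m_j^{(-i)}-m_j$ equals the sign of $m_j-k(o_j,o_i)$. This gives $m_j^{(-i)}>m_j$ if and only if $k(o_j,o_i)<m_j$. No symmetry of $k$ is needed here beyond what the lemma already uses.

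\textbf{Second equivalence.} The standing definitions assume $g'<0$ on the relevant range. I will use this to say that $g$ is strictly decreasing on an interval containing both $m_j$ and $m_j^{(-i)}$. Both quantities are averages of kernel values, so they lie in $[0,1]$ when $k$ takes values in $[0,1]$ as in Assumption~\ref{assump:reg_g_k}. A strictly decreasing function reverses strict order in both directions, so
\[
g(m_j^{(-i)})<g(m_j)\;\Longleftrightarrow\; m_j^{(-i)}>m_j.
\]
Chaining this with the first equivalence gives $g(m_j^{(-i)})<g(m_j)\Longleftrightarrow k(o_j,o_i)<m_j$.

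\textbf{Main obstacle.} The only delicate point is the range over which $g$ is strictly decreasing. I will fix this by working on $[0,1]$, which contains all the masses involved. For the canonical choice $g(x)=-\log(1+x)$ this holds trivially. Everything else is a sign argument that needs no further estimates.
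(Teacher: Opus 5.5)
Your proposal is correct and follows the paper's argument: the paper also reads the sign of $m_j^{(-i)}-m_j$ off Lemma~\ref{lem:removal_update} (using $G-2>0$) and then passes it through the strict monotonicity of $g$. Your check that both masses lie in $[0,1]$, where $g'<0$ makes $g$ strictly decreasing, adds a detail that the paper's one-line ``immediate'' proof leaves implicit.
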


\begin{proof}
Immediate from Lemma~\ref{lem:removal_update} and monotonicity of $g$.
\end{proof}

\section{Experiments Setting}\label{appendix:setting}

\subsection{Training Settings}
We train Qwen-1.5B and Qwen-7B models on GSM8K following the experimental settings in \cite{yao2025diversity} to ensure a fair comparison. The hyperparameters for \PO{}+GRPO, \PO{}+GSPO, and \PO{}+DAPO are summarized in Tables~\ref{tab:experiment_settings-grpo}, \ref{tab:gspo_settings}, and \ref{tab:dapo_settings}, respectively. For the extended experiments on Qwen-32B, the detailed settings are provided in Table~\ref{tab:32B_settings}. For the Countdown experiments, the corresponding hyperparameters are listed in Table~\ref{tab:hyperparameters_countdown}. In all settings, the kernel function is defined as $k(o_i,o_j) = \operatorname{clamp}(\operatorname{sim}(e_i,e_j))$, where $o_i$ and $o_j$ denote the output answers and $e_i, e_j$ are their corresponding semantic embeddings. Here, $\operatorname{sim}$ represents cosine similarity, and the $\operatorname{clamp}$ operation is applied to ensure the kernel value remains within the range $[0,1]$.
\begin{table}[htbp]
    \centering
    \caption{Hyperparameters and Experimental Settings for SetPO+GRPO and GRPO}
    \label{tab:experiment_settings-grpo}
    \begin{tabular}{lr|lr}
        \toprule
        \textbf{Parameter} & \textbf{Value} & \textbf{Parameter} & \textbf{Value} \\
        \midrule
        Base Model & Qwen2.5-Math-7B/1.5B & Algorithm & GRPO \\
        Dataset & GSM8K & Embedding Model& Qwen3-Embedding-0.6B \\
        \midrule
        \multicolumn{4}{c}{\textit{Training Configuration}} \\
        \midrule
        Global Batch Size & 64 & Learning Rate & $3 \times 10^{-6}$ \\
        KL Coefficient & $1 \times 10^{-4}$ & Total Epochs & 2 \\
        Rollout Samples ($N$) & 6 & Training Rollout Temp. & 0.9 \\
        Clip Ratio & 0.2 & Max Response Len. & 756 \\
        Marginal Weight & 0.05 & Shape Function & $-\log(1+x)$ \\
        \midrule
        \multicolumn{4}{c}{\textit{Inference \& Evaluation Configuration}} \\
        \midrule
        Inference Temperature & 0.5 & Top-$p$ & 0.9 \\
        \bottomrule
    \end{tabular}
\end{table}

\begin{table}[htbp]
    \centering
    \caption{Hyperparameters and Experimental Settings for SetPO+GSPO and GSPO}
    \label{tab:gspo_settings}
    \begin{tabular}{lr|lr}
        \toprule
        \textbf{Parameter} & \textbf{Value} & \textbf{Parameter} & \textbf{Value} \\
        \midrule
        Base Model & Qwen2.5-Math-7B/1.5B & Algorithm & GSPO \\
        Dataset & GSM8K & Embedding Model& Qwen3-Embedding-0.6B  \\
        \midrule
        \multicolumn{4}{c}{\textit{Training Configuration}} \\
        \midrule
        Global Batch Size & 64 & Learning Rate & $1 \times 10^{-6}$ \\
        KL Coefficient & $1 \times 10^{-4}$ & Total Epochs & 2 \\
        Clip Ratio Low ($\epsilon_{low}$) & 0.0003 & Clip Ratio High ($\epsilon_{high}$) & 0.0004 \\
        Temperature & 0.9 & Max Response Len. & 756 \\
        Rollout Number & 6 & Mini Batch Size & 64 \\
        Marginal Weight & 0.1 & Shape Function & $-\log(1+x)$  \\
        \midrule
        \multicolumn{4}{c}{\textit{Inference \& Evaluation Configuration}} \\
        \midrule
        Inference Temperature & 0.5 & Top-$p$ & 0.9 \\
        \bottomrule
    \end{tabular}
\end{table}

\begin{table}[htbp]
    \centering
    \caption{Hyperparameters and Experimental Settings for SetPO+DAPO and DAPO}
    \label{tab:dapo_settings}
    \begin{tabular}{lr|lr}
        \toprule
        \textbf{Parameter} & \textbf{Value} & \textbf{Parameter} & \textbf{Value} \\
        \midrule
        Base Model & Qwen2.5-Math-7B/1.5B & Algorithm & DAPO \\
        Dataset & GSM8K & Embedding Model& Qwen3-Embedding-0.6B  \\
        \midrule
        \multicolumn{4}{c}{\textit{Training Configuration}} \\
        \midrule
        Global Batch Size & 64 & Learning Rate & $1 \times 10^{-6}$ \\
        KL Coefficient & $1 \times 10^{-4}$ & Total Epochs & 2 \\
        Clip Ratio Low ($\epsilon_{low}$) & 0.2 & Clip Ratio High ($\epsilon_{high}$) & 0.28 \\
        Temperature & 0.9 & Max Response Len. & 756 \\
        Rollout Number & 6 & Mini Batch Size & 64 \\
        Marginal Weight & 0.2 & Shape Function & $-\log(1+x)$  \\
        \midrule
        \multicolumn{4}{c}{\textit{Inference \& Evaluation Configuration}} \\
        \midrule
        Inference Temperature & 0.5 & Top-$p$ & 0.9 \\
        \bottomrule
    \end{tabular}
\end{table}

\begin{table}[htbp]
    \centering
    \caption{Hyperparameters and Experimental Settings for SetPO+GRPO and GRPO of Qwen2.5-32B-Base}
    \label{tab:32B_settings}
    \begin{tabular}{lr|lr}
        \toprule
        \textbf{Parameter} & \textbf{Value} & \textbf{Parameter} & \textbf{Value} \\
        \midrule
        Base Model & Qwen2.5-32B-Base & Algorithm & GRPO \\
        Dataset & Math-Hard \cite{zeng2025simplerl} & Embedding Model& Qwen3-Embedding-0.6B  \\
        \midrule
        \multicolumn{4}{c}{\textit{Training Configuration}} \\
        \midrule
        Global Batch Size & 1024 & Learning Rate & $1 \times 10^{-6}$ \\
        KL Coefficient & $1 \times 10^{-3}$ & Total Epochs & 15 \\
        Clip Ratio & 0.2 & Mini Batch Size & 256 \\
        Temperature & 1 & Max Response Len. & 4096 \\
        Marginal Weight & 0.1 & Shape Function & $-\log(1+x)$  \\
        Rollout Number & 6 & Mini Batch Size & 64 \\
        \midrule
        \multicolumn{4}{c}{\textit{Inference \& Evaluation Configuration}} \\
        \midrule
        Inference Temperature & 0.5 & Top-$p$ & 0.9 \\
        \bottomrule
    \end{tabular}
\end{table}

\begin{table}[htbp]
    \centering
    \caption{Detailed hyperparameter settings for the Qwen2.5-3B GRPO experiments for Countdown.}
    \label{tab:hyperparameters_countdown}
        \begin{tabular}{lr|lr}
            \toprule
            \textbf{Parameter} & \textbf{Value} & \textbf{Parameter} & \textbf{Value} \\
            \midrule
            Base Model & Qwen2.5-3B & Algorithm & GRPO \\
            Dataset & Countdown & Embedding Model& Qwen3-Embedding-0.6B \\
            \midrule
            \multicolumn{4}{c}{\textit{Training Configuration}} \\
            \midrule
            Global Batch Size & 128 & Learning Rate & $1 \times 10^{-6}$ \\
            KL Coefficient ($\beta$) & 0.001 & Training Epochs & 1 \\
            Rollout Group Size ($G$) & 7/ 3 & Total Training Samples & 86400 \\
            Max Prompt Length & 512 & Max Response Length & 2048 \\
            Marginal Weight & 0.5 & Shape Function & -$\log(1+x)$ \\
            \midrule
            \multicolumn{4}{c}{\textit{Inference \& Evaluation Configuration}} \\
            \midrule
            Eval Response Length & 2048 & Eval Samples ($N$) & 64 \\
            Temperature & 1.0/ 0.5 & Top-$p$ & 0.9 \\
            \bottomrule
        \end{tabular}
\end{table}

\subsection{Evaluation Settings}
Following the evaluation protocol \cite{chen2021evaluating}, we employ the unbiased estimator for the Pass@k metric. Directly generating only $k$ samples to measure accuracy often yields high variance. Therefore, we generate a larger pool of $n$ samples and analytically calculate the probability of finding at least one correct solution within a random subset of size $k$. Let $n$ denote the total sample count and $c$ the count of correct solutions. The unbiased estimator is defined as:
\begin{equation}
\text{Pass}@k := 1 - \frac{\binom{n-c}{k}}{\binom{n}{k}}
\end{equation}
In our experiments, we fix the total generation budget at $n=64$ and evaluate Pass@k for $k \in \{1, 2, 4, 8, 16, 32, 64\}$. This formula naturally simplifies to empirical accuracy when $n=k$. Additionally, if the number of incorrect samples is fewer than $k$ (i.e., $n-c < k$), the metric is defined as 1, indicating that any chosen subset of size $k$ would inherently contain a correct solution.

\section{More Results on Mathematical Reasoning}
\subsection{Complete Comparison on Mathematical Benchmark}\label{appendix: complete}
The complete comparison from Pass@1 to Pass@k is shown in Figures~\ref{fig:complete grpo-gspo} and~\ref{fig:appendix-dapo}. We evaluate different sampling budgets for the three baselines and their \PO{}-augmented variants. All experiments are conducted with five random seeds to reduce randomness, and the shaded regions indicate variance. We observe that \PO{} consistently reduces the variance of the baselines and yields stable performance improvements. Moreover, the gains do not diminish as the sampling budget increases, suggesting that \PO{} effectively improves outcome diversity by covering more reward modes. Finally, Figure~\ref{fig:appendix-32B} further confirms that our method remains effective when scaling to larger models.
\begin{figure}
    \centering
    \includegraphics[width=0.99\linewidth]{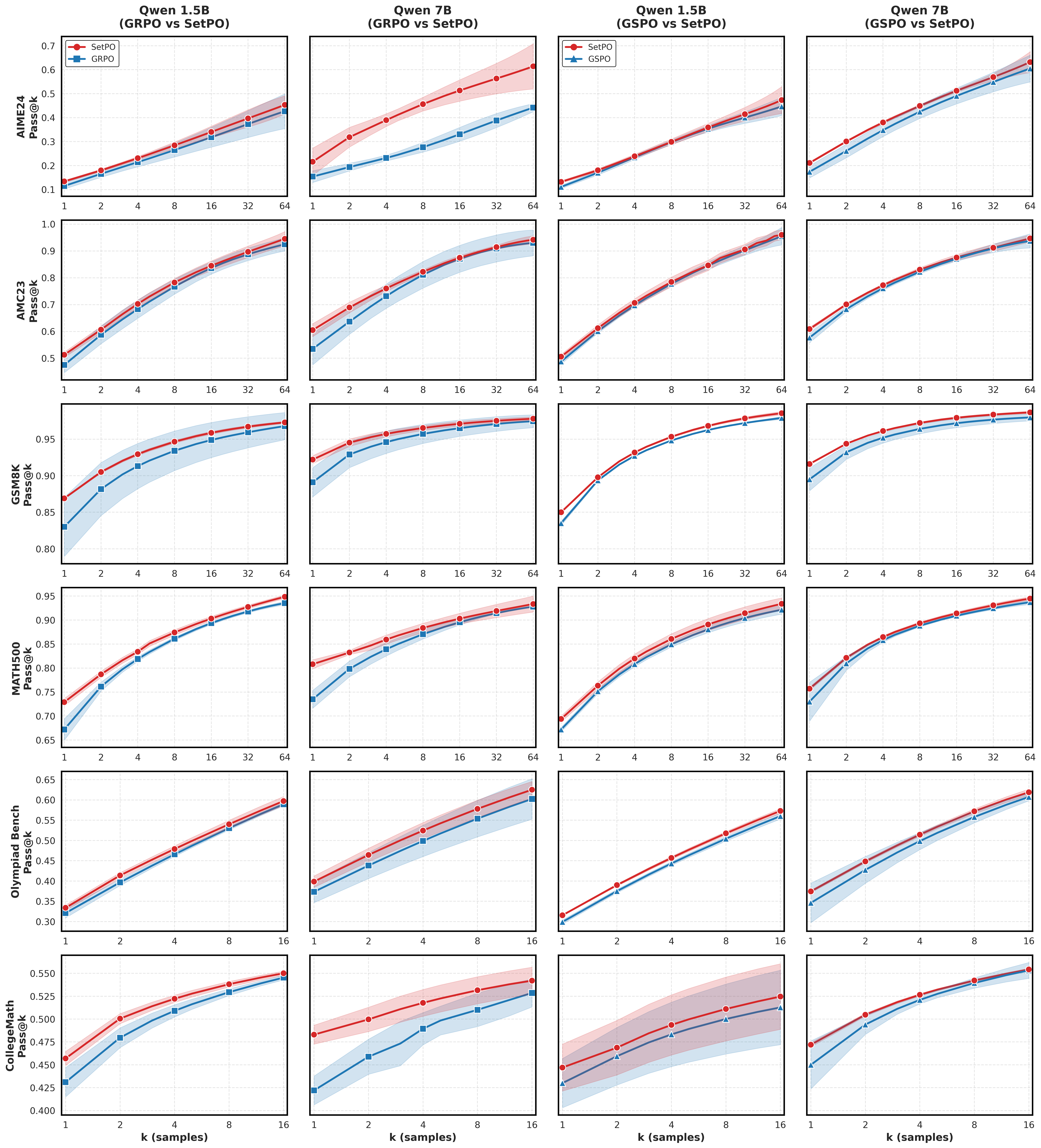}
    \caption{Performance of SetPO+GRPO vs. GRPO and SetPO+GSPO vs. GSPO on Qwen2.5-Math-1.5B and Qwen2.5-Math-7B across multiple benchmarks. SetPO consistently outperforms the corresponding baselines and, in most cases, reduces training variability. Shaded regions denote variance across runs, and solid lines indicate the mean.}
    \label{fig:complete grpo-gspo}
\end{figure}

\begin{figure}
    \centering
    \includegraphics[width=0.9\linewidth]{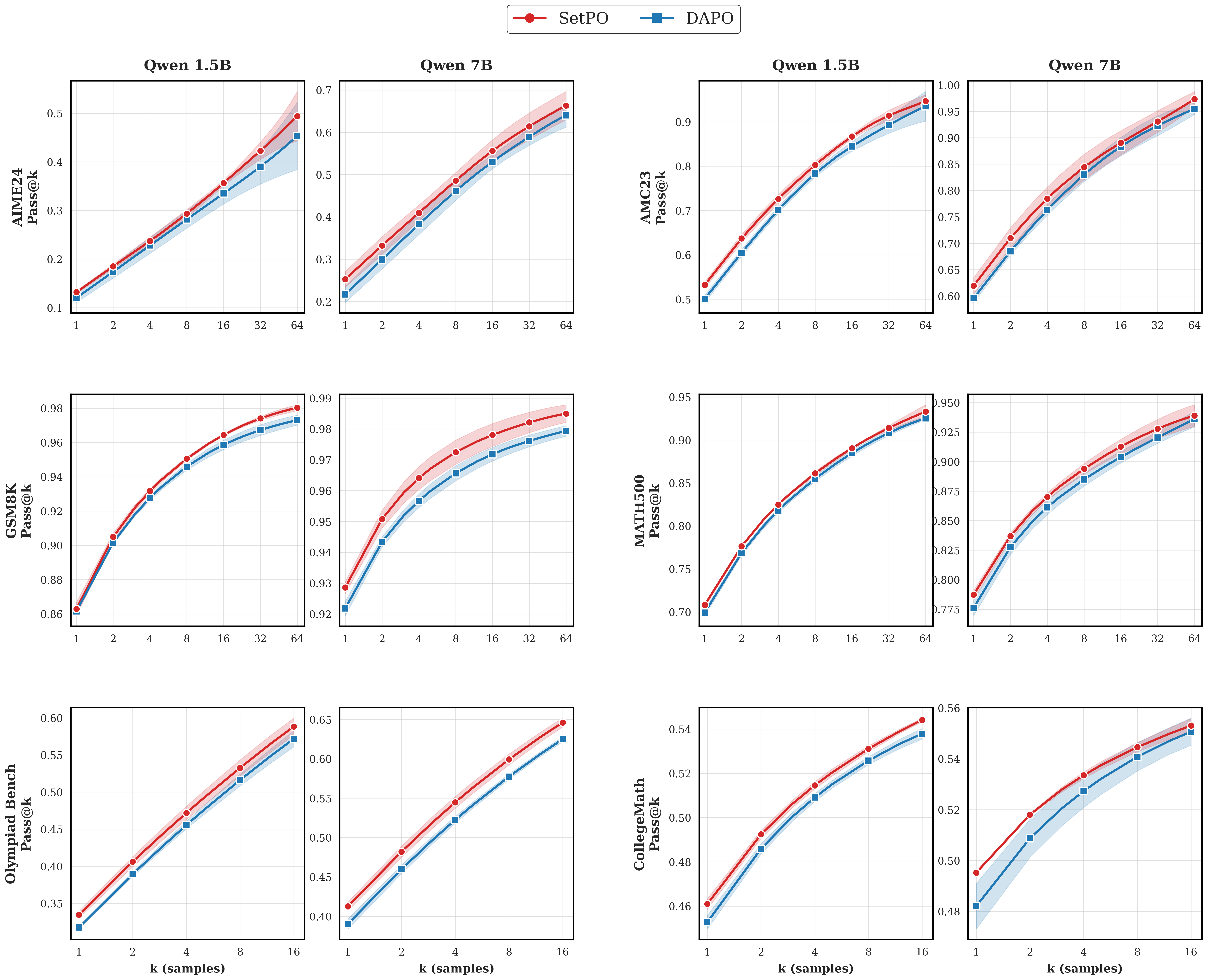}
    \caption{Performance of SetPO+DAPO vs. DAPO on Qwen2.5-Math-1.5B and Qwen2.5-Math-7B across multiple benchmarks. SetPO consistently outperforms the corresponding baselines. Shaded regions denote variance across runs, and solid lines indicate the mean.}
    \label{fig:appendix-dapo}
\end{figure}
\subsection{Reaching Higher Performance when $K$ is sufficiently high}

As noted by \cite{yue2025does}, improvements in Pass@1 do not always reflect genuine capability gains. In some cases, comparable performance can be obtained simply by increasing the sampling budget of the base model. Furthermore, they show that models trained with current policy gradient methods can underperform the base model in terms of Pass@k, suggesting that the apparent gains from RLVR may partially arise from a reduction in outcome diversity rather than improved underlying competence. Motivated by this observation, we evaluate our model on AIME24 from pass@$1$ through pass@$256$. As shown in Figure \ref{fig:aime scale}, our method consistently and substantially outperforms the base model across the entire range of $K$, including at $K=256$, indicating that the gains are not merely an artifact of more sampling but reflect robust improvements over the underlying base model.

\begin{figure}[htbp]
    \centering
    \includegraphics[width=0.5\linewidth]{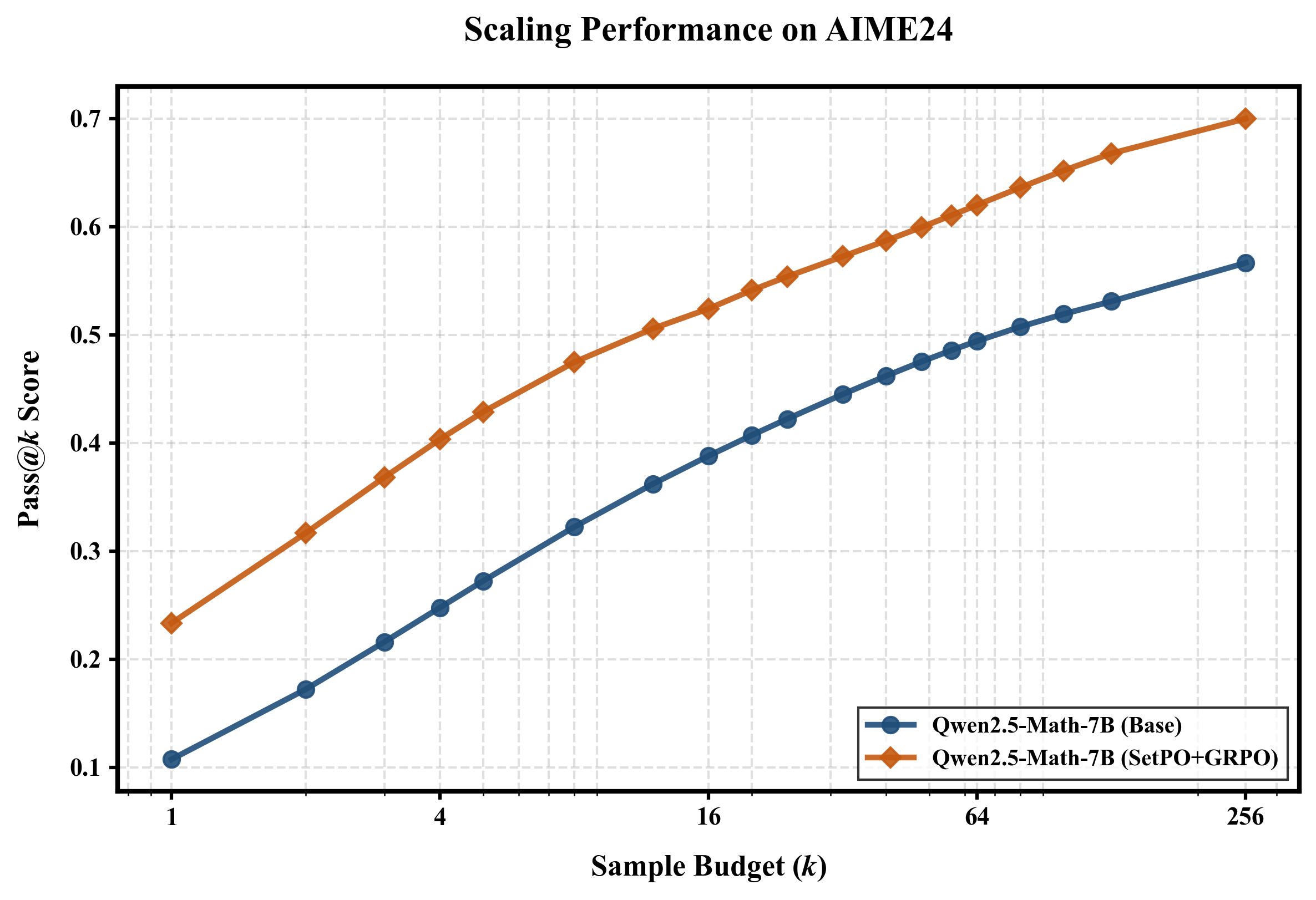}
    \caption{Pass@k performance on AIME24 benchmark across sampling budgets $k$, comparing the instruction-tuned base model and the RL-finetuned model (SetPO+GRPO). Notably, SetPO+GRPO maintains strong performance at large  $k$, exceeding the base model even under high-pass evaluation.}
    \label{fig:aime scale}
\end{figure}
\subsection{Timing Cost on Adding SetPO to Original Algorithms}
Although SetPO leverages an additional embedding model (Qwen3-0.6B-Embedding) to compute the leave-one-out marginal reward, the extra computation is lightweight and readily controllable. Figure \ref{fig:time} reports the end-to-end wall-clock time of baseline algorithms and their SetPO-augmented counterparts. Across GRPO, GSPO, and DAPO, incorporating SetPO increases total runtime by less than 10\%, indicating that the embedding-based marginal reward estimation does not materially affect training throughput. Moreover, this overhead is not fundamental and can be further reduced with more efficient implementations of the embedding pipeline (e.g., optimized parallelization and better IO/serialization), making SetPO a practical drop-in augmentation for existing RL pipelines.

\begin{figure}
    \centering
    \includegraphics[width=0.6\linewidth]{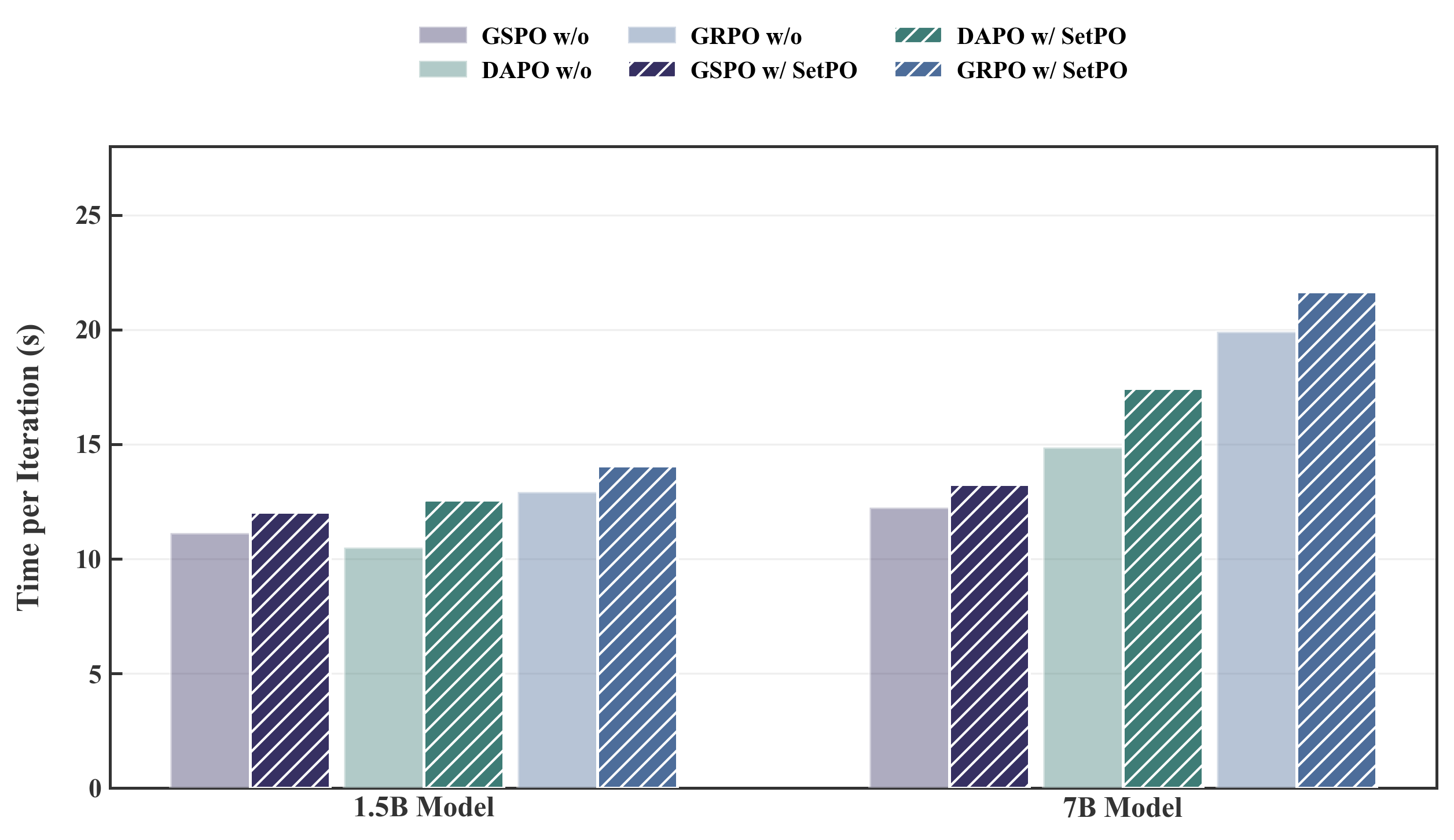}
    \caption{Wall-clock time comparison between baseline algorithms and their SetPO-augmented variants. For GRPO, GSPO, and DAPO, adding SetPO introduces less than 10\% additional compute overhead, and this cost can be further reduced with more efficient implementations.}
    \label{fig:time}
\end{figure}

\begin{figure}
    \centering
    \includegraphics[width=0.9\linewidth]{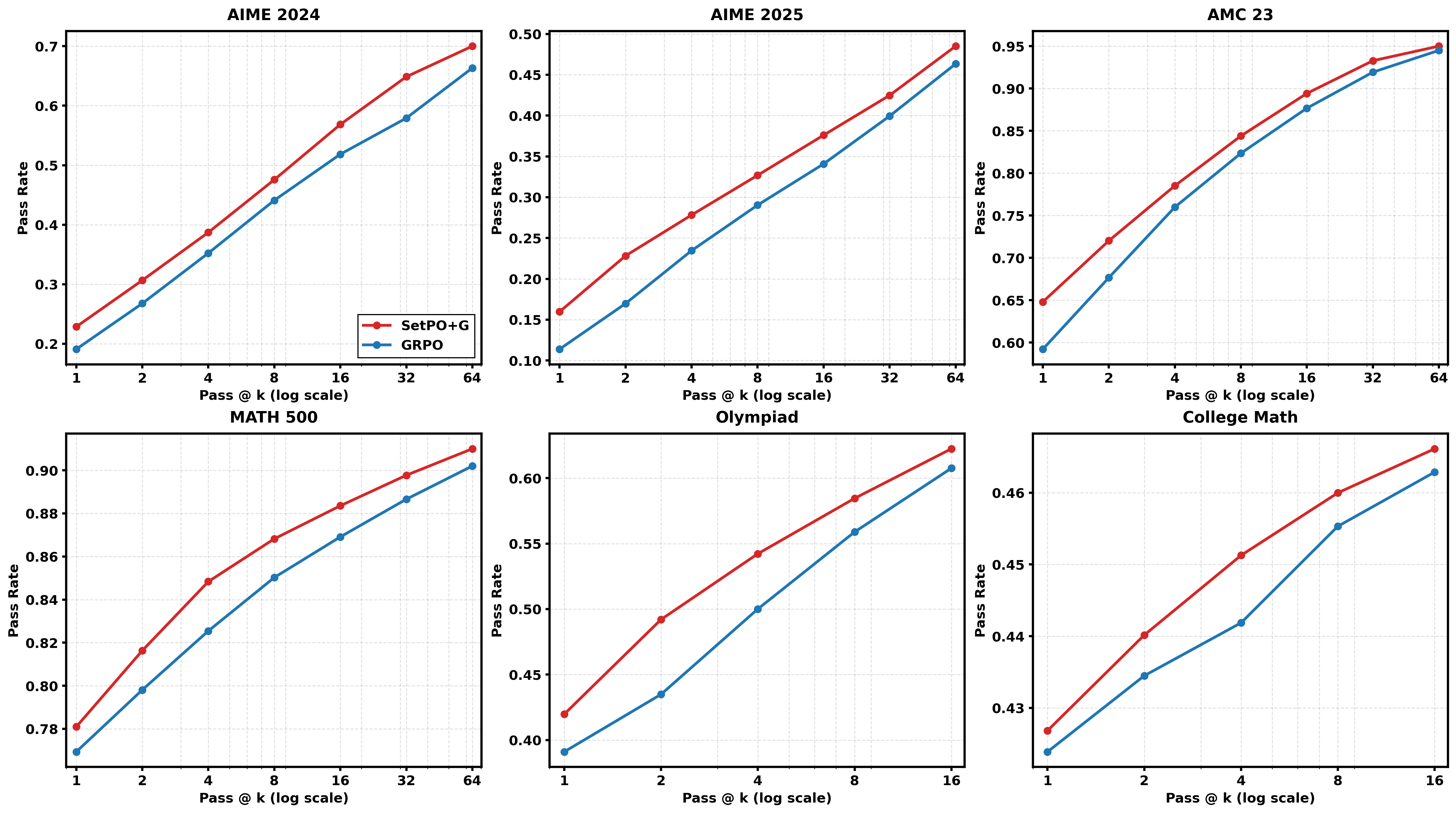}
    \caption{Performance of SetPO+GRPO vs GRPO on Qwen2.5-32B. }
    \label{fig:appendix-32B}
\end{figure}

\section{Case Study}

\subsection{Case Study: Diversity in Mathematical Reasoning Solutions}
In this subsection, we use a representative olympiad-style number theory problem to illustrate the diversity of reasoning paths produced by our SetPO+GRPO training. 

\textbf{Problem Statement}. Let \(b\ge 2\) be an integer. Call a positive integer \(n\) \(b\text-\textit{eautiful}\) if it has exactly two digits when expressed in base \(b\) and these two digits sum to \(\sqrt n\). For example, \(81\) is \(13\text-\textit{eautiful}\) because \(81 = \underline{6} \ \underline{3}_{13} \) and \(6 + 3 = \sqrt{81}\). Find the least integer \(b\ge 2\) for which there are more than ten \(b\text-\textit{eautiful}\) integers.

Instead of converging to a single proof, the trained model can generate multiple solution strategies for the problem and arrive at the same final answer. Below, we present two such solutions: one proceeds mainly via algebraic manipulation of the digit constraints, while the other follows a constructive viewpoint. These distinct approaches provides concrete evidence that SetPO+GRPO encourages diverse problem-solving strategies, validating the diversity of our generated solutions.

\begin{tcolorbox}[
    colback=gray!5,      
    colframe=black!75, 
    title=\textbf{Solution I}, 
    fonttitle=\bfseries,
    sharp corners=south,
    boxrule=0.5mm
]

To solve this problem, we need to find the smallest base \( b \geq 2 \) such that there are more than ten \( b \)-beautiful integers. A number \( n \) is \( b \)-beautiful if it has exactly two digits in base \( b \) and the sum of these digits is equal to \( \sqrt{n} \).Let's break down the problem:

\begin{enumerate}
    \item A two-digit number in base \( b \) can be written as \( n = ab + c \), where \( a \) and \( c \) are the digits, \( 1 \leq a \leq b-1 \), and \( 0 \leq c \leq b-1 \).
    \item The condition for \( n \) to be \( b \)-beautiful is \( a + c = \sqrt{n} \). Since \( n = ab + c \), we have \( \sqrt{ab + c} = a + c \). Squaring both sides, we get \( ab + c = (a + c)^2 = a^2 + 2ac + c^2 \).
    \item Rearranging the equation, we get \( ab + c = a^2 + 2ac + c^2 \), which simplifies to \( ab = a^2 + 2ac + c^2 - c \). This can be further simplified to \( ab = a^2 + c(2a + c - 1) \).

\end{enumerate}

We need to find the smallest base \( b \) such that there are more than ten solutions to this equation. We'll start with \( b = 2 \) and increment \( b \) until we find a base with more than ten \( b \)-beautiful integers. The least integer \( b \geq 2 \) for which there are more than ten \( b \)-beautiful integers is \(\boxed{211}\).

\end{tcolorbox}

\begin{tcolorbox}[
    colback=gray!5,      
    colframe=black!75, 
    title=\textbf{Solution II}, 
    fonttitle=\bfseries,
    sharp corners=south,
    boxrule=0.5mm
]

Let's break down the problem step by step and use  to find the solution.
\begin{enumerate}
    \item \textbf{Understanding \(b\)-beautiful numbers}: A number \(n\) is \(b\)-beautiful if it has exactly two digits in base \(b\) and the sum of these digits is \(\sqrt{n}\). If \(n\) has two digits in base \(b\), it can be expressed as \(n = d_1 \cdot b + d_2\) where \(d_1\) and \(d_2\) are the digits, with \(1 \leq d_1 \leq b-1\) and \(0 \leq d_2 \leq b-1\). The condition for \(n\) being \(b\)-beautiful is \(d_1 + d_2 = \sqrt{n}\).
    \item  \textbf{Range of \(n\)}: Since \(n\) has two digits in base \(b\), \(b \leq n < b^2\). Therefore, \(b \leq n < b^2\) and \(\sqrt{n}\) must be an integer. This means \(n\) must be a perfect square, say \(n = k^2\), where \(b \leq k^2 < b^2\). Hence, \(k\) must be an integer such that \(\sqrt{b} \leq k < b\).
    \item  \textbf{Finding \(b\)-beautiful numbers}: For each \(k\) in the range \(\lceil \sqrt{b} \rceil \leq k < b\), we need to check if \(k^2\) can be expressed as \(d_1 \cdot b + d_2\) with \(d_1 + d_2 = k\). This means \(d_1 = \left\lfloor \frac{k^2}{b} \right\rfloor\) and \(d_2 = k^2 \mod b\). We need to check if \(d_1 + d_2 = k\).
    \item  \textbf{Counting \(b\)-beautiful numbers}: We need to find the smallest \(b\) such that there are more than ten \(b\)-beautiful numbers. We will iterate over \(b\) starting from 2 and count the number of \(b\)-beautiful numbers for each \(b\).Let's implement this in  to find the solution.The least integer \( b \geq 2 \) for which there are more than ten \( b \)-beautiful integers is \(\boxed{211}\).

\end{enumerate}

\end{tcolorbox}

\subsection{Case Study: Diversity in Countdown Task Solutions}

To demonstrate the diversity of the generated outputs, we analyze a specific instance of the Countdown task where the objective is to construct the target integer $63$ using the set of operands $\{54, 24, 21, 3\}$. Model trained after SetPO+GRPO successfully identifies multiple valid calculation paths. Rather than converging on a single heuristic, the model generates seven distinct expressions. These solutions exhibit diversity in two dimensions: \textit{algebraic strategy} (utilizing different primary operators) and \textit{structural variation} (exploiting commutativity and operator precedence).

The generated solutions are presented below:

\begin{empheq}[left=\empheqlbrace]{align}
    & 54 + (24 - 21) \times 3 \label{eq:strategy1} \\
    & (24 - 21) \times 3 + 54 \nonumber \\
    & 54 - (21 - 24) \times 3 \label{eq:strategy1_variant} \\
    & (54 / 3) + 24 + 21 \label{eq:strategy2} \\
    & 24 + 54 / 3 + 21 \nonumber \\
    & 24 + 21 + 54 / 3 \nonumber \\
    & 21 + 24 + 54 / 3 \nonumber
\end{empheq}

Equations \eqref{eq:strategy1} through \eqref{eq:strategy1_variant} represent a strategy based on difference amplification, specifically computing $3 \times 3 + 54$. Notably, Equation \eqref{eq:strategy1_variant} demonstrates the model's implicit understanding of signed arithmetic by transforming addition into the subtraction of a negative difference. Conversely, Equations \eqref{eq:strategy2} utilize a division-based strategy, summing partial components ($18 + 45$). The variation among these latter equations confirms the model's grasp of commutative properties within the solution space. This output suggests the model does not merely recall a single solving pattern but actively explores the arithmetic landscape to provide a diverse set of valid constructions.
\section{Prompts Used in this Paper}
\paragraph{The Prompt Used for Mathematical Reasoning}
Here is a typical case for GSM8K dataset.
\begin{tcolorbox}[
    colback=gray!5,      
    colframe=black!75, 
    title=\textbf{An Example of a GSM8K problem}, 
    fonttitle=\bfseries,
    sharp corners=south,
    boxrule=0.5mm
]

Julie is reading a 120-page book. Yesterday, she was able to read 12 pages and today, she read twice as many pages as yesterday. If she wants to read half of the remaining pages tomorrow, how many pages should she read?

Please reason step by step, and put your final answer within \textbackslash boxed$\{\}$.

\end{tcolorbox}

\paragraph{The Prompt Used for Countdown}
Here is a typical case for countdown dataset.

\begin{tcolorbox}[
    colback=gray!5,      
    colframe=black!75, 
    title=\textbf{An Example of a countdown problem}, 
    fonttitle=\bfseries,
    sharp corners=south,
    boxrule=0.5mm
]

A conversation between User and Assistant. The user asks a question, and the Assistant solves it. The assistant first thinks about the reasoning process in the mind and then provides the user with the answer.

User: Using the numbers [35, 51, 46], create an equation that equals 40. You can use basic arithmetic operations (+, -, *, /) and each number can only be used once. Show your work in $<$think$>$ $<$/think$>$ tags. And return the final answer in $<$answer$>$ $<$/answer$>$ tags, for example $<$answer$>$ (1 + 2) / 3 $<$/answer$>$.

Assistant: Let me solve this step by step.

$<$think$>$

\end{tcolorbox}

\paragraph{The Prompt Used for Diversity Evaluation}\label{appendix: diversity prompt} 
Our prompt is used for diversity evaluation is given as below. We use Gemini-2.5-flash.

\begin{tcolorbox}[
    colback=gray!5,      
    colframe=black!75, 
    breakable,
    title=\textbf{Prompt: AIME Solution Diversity Evaluator}, 
    fonttitle=\bfseries,
    sharp corners=south,
    boxrule=0.5mm
]

\textbf{You are an expert in Mathematics Competition Coach evaluating solutions for the AIME.} \\
Your task is to rate the DIVERSITY of problem-solving strategies across 16 generated attempts.

\vspace{0.2cm}
\textbf{\large INPUT DATA} \\
PROBLEM: \\
\{problem\}

\vspace{0.2cm}
\noindent 16 SOLUTION ATTEMPTS: \\
\{responses\}

\vspace{0.2cm}
\textbf{\large EVALUATION CRITERIA}
\begin{enumerate}[leftmargin=*, noitemsep]
    \item \textbf{Identify Strategy Clusters}: Group solutions by their fundamental mathematical path.
    \item \textbf{Computational/Brute-force}: Python scripts or trial-and-error without proof are ONE cluster.
    \item \textbf{Ignore Surface Variations}: Different variable names or minor algebraic steps are the SAME strategy.
\end{enumerate}

\vspace{0.3cm}
\textbf{\large SCORING RUBRIC (1-5)}

\textbf{Score 1 (Minimal)}
\begin{itemize}[leftmargin=*, noitemsep]
    \item \textbf{13+ responses} rely on the exact same theorem or computational path.
    \item Zero meaningful exploration of alternative mathematical properties.
\end{itemize}

\textbf{Score 2 (Low)}
\begin{itemize}[leftmargin=*, noitemsep]
    \item \textbf{10-12 responses} follow the standard textbook approach.
    \item Only 1 or 2 attempts show a different perspective.
\end{itemize}

\textbf{Score 3 (Moderate)}
\begin{itemize}[leftmargin=*, noitemsep]
    \item \textbf{7-9 responses} use the dominant strategy.
    \item There are 2-3 clearly distinct mathematical frameworks present.
\end{itemize}

\textbf{Score 4 (High)}
\begin{itemize}[leftmargin=*, noitemsep]
    \item \textbf{4-6 responses} use the most common approach.
    \item The set demonstrates 3-4 distinct, well-developed strategies.
\end{itemize}

\textbf{Score 5 (Maximum)}
\begin{itemize}[leftmargin=*, noitemsep]
    \item \textbf{3 or fewer responses} use the most common approach.
    \item The solutions cover a wide spectrum of techniques.
    \item At least 4+ fundamentally different mathematical paths are successfully applied.
\end{itemize}

\vspace{0.3cm}
\textbf{\large OUTPUT FORMAT} \\
\textbf{Output the score FIRST inside double brackets, followed by a brief reason.}

Format example: \\
\texttt{[[3]] \\
The solutions use two main approaches: coordinate geometry and similar triangles...}

\vspace{0.2cm}
YOUR RESPONSE:

\end{tcolorbox}

\end{document}